\newtheorem{theorem}{Theorem}
\newtheorem{lemma}{Lemma}
\newtheorem*{remark}{Remark}
\newcommand{\cD}{\mathcal{D}}
\newcommand{\R}{\mathbb{R}}
\newcommand{\eps}{\epsilon}
\newcommand{\RR}{\mathbb{R}}
\newcommand{\CS}{\mathbb{S}}
\newcommand{\1}{\mathbbm{1}}
 \newcommand{\rv}{\right\Vert}
 \newcommand{\lv}{\left\Vert}
 \newcommand{\an}[1]{\textcolor{black}{#1}}
  \newcommand{\In}[1]{\textcolor{black}{#1}}
  \newcommand{\tu}{^{(t)}}
  \newcommand{\lp }{\left(}
  \newcommand{\rp }{\right)}
\newcommand{\CP}{\mathbb{P}}
\DeclareMathOperator{\sign}{sign}
\newcommand{\s}{\sigma}
\newcommand{\G}{\Gamma}
\DeclareMathOperator{\ReLU}{ReLU}
\DeclareMathOperator{\marg}{marg}
\DeclareMathOperator{\sig}{sigmoid}
\global\long\def\con{\subset}
\global\long\def\a{\rightarrow}
\begin{document}

\title{On Symmetry and Initialization for Neural Networks}


\author{Ido Nachum}
\address{Department of Mathematics, Technion-IIT}
\email{idon@tx.technion.ac.il}

\author{Amir Yehudayoff}
\address{Department of Mathematics, Technion-IIT}
\email{amir.yehudayoff@gmail.com}
\thanks{Supported by ISF grant 1162/15. This work was done while A.Y.\ was visiting the Simons Institute for the Theory of Computing.}


\maketitle

\begin{abstract}
This work provides an additional step in the theoretical
understanding of neural networks. We consider neural networks with one hidden layer and show that when learning symmetric functions, one can choose initial conditions so that standard SGD training efficiently produces generalization guarantees. We empirically verify this and show that this does not hold when the initial conditions are chosen at random. The proof of convergence investigates the interaction between the two layers of the network. Our results highlight the importance of using symmetry in the design of neural networks.
\end{abstract}


\section{Introduction}


Building a theory that can help to understand neural networks 
and guide their construction is one of the current 
challenges of machine learning.
Here we wish to shed some light on the role
symmetry plays in the construction of neural networks.
It is well-known that symmetry can be used
to enhance the performance of neural networks. 
For example, convolutional neural networks (CNNs) (see \cite{lecun2}) 
use the translational symmetry of images
to classify images better than fully connected neural networks.
Our focus is on the role of symmetry in 
the initialization stage.
We show that symmetry-based initialization can be the difference between failure and success.


%
%
%
%
%
%
%
%
%
%

On a high-level, the study of neural networks
can be partitioned to three different aspects.
\begin{description}
\item[{\em Expressiveness}] Given an architecture, what are the functions it can approximate well?
\item[{\em Training}] Given a network with a ``proper'' architecture, can the network fit the training data and in a reasonable time?
\item[{\em Generalization}] Given that the training seemed successful, 
will the true error be small as well?
\end{description}
We study these aspects for
the first ``non trivial'' case of neural networks,
networks with one hidden layer.
We are mostly interested in the initialization phase.
If we take a network with the appropriate architecture, we can always initialize it to the desired function. 
A standard method \In{(that induces a non trivial learning problem)} is using random weights to initialize the network.
A different reasonable choice is to require the initialization
to be useful for an entire class of functions.
We follow the latter option.


Our focus is on the role of symmetry.
We consider the following class of symmetric functions

$$\CS  = \CS_n = \Big\{    \sum_{i=0}^n a_i \cdot \1 _{|x| = i} : a_1,\ldots,a_n \in \{ \pm 1\} \Big\} ,$$
where $x\in \{ 0,1 \}^n$ and $|x| = \sum_i x_i$.
The functions in this class 
are invariant under arbitrary permutations of the input's
coordinates.
The parity function $\pi(x)=(-1)^{|x|}$ 
and the majority function 
are well-known examples of symmetric functions.

%







\emph{Expressiveness} for this class was explored by \cite{Minsky}. They showed that the parity function cannot be represented using a 
network with limited ``connectivity''.
Contrastingly, if we use a fully connected network with one hidden layer and  a common activation function (like $\sign$, $\sig$, or $\ReLU$) only $O(n)$ neurons are needed. We provide such
explicit representations for all functions in $\CS$;
see Lemmas~\ref{sig_lem} and~\ref{re_lem}.



We also provide useful information
on both the {\em training} phase and {\em generalization} capabilities
of the neural network.
We show that, with proper initialization, 
the training process (using standard SGD)
efficiently converges to zero empirical error,
and that consequently the network
has small true error as well.

\begin{theorem}
\label{main} 
{There exists a constant $c>1$ so that the following holds.}
There exists a network with one hidden layer, $cn$ neurons with $\sig$ or $\ReLU$ activations, and an initialization such that for all distributions $\cD$ over $X = \{0,1\}^n$ and all functions $f\in \CS$
with sample size $m \geq c (n+\log (1/\delta))/\epsilon$,
after performing $poly(n)$ SGD updates with a fixed step size $h = 1/poly(n)$ it holds that 
\[ \underset{x^m \sim \cD ^m}{P} \Big( \Big\lbrace  S :  \underset{x\sim \cD}{\Pr}( N_S(x) \neq f(x)) > \epsilon \Big\rbrace  \Big) < \delta \]   
where $S=\{(x_1,f(x_1)),...,(x_m,f(x_m))\}$
and $N_S(x)$ is the network after training over $S$.
\end{theorem}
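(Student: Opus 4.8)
The plan is to decompose the claim into an optimization statement --- that SGD from the prescribed initialization reaches zero empirical error in $\text{poly}(n)$ updates --- and a generalization statement --- that zero empirical error together with the small effective complexity of the trained network yields true error below $\epsilon$. The generalization half I would treat as the routine part: because the initialization forces each hidden unit to depend on the input only through $|x| = \langle \mathbf{1}, x\rangle$, the network computes a function of $|x|$, and the class of such functions on $\{0,1\}^n$ has VC dimension at most $n+1$; the realizable PAC bound $m \ge c(\VC + \log(1/\delta))/\epsilon$ then gives exactly the stated sample size, provided one also checks that the first layer's movement during training is small enough that the trained network still essentially depends only on $|x|$. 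Thus the heart of the proof is the optimization claim, which I would isolate as a separate lemma.

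For the optimization I would start from the explicit representations of Lemmas~\ref{sig_lem} and~\ref{re_lem}: every $f \in \CS$ is realized by a network whose first-layer weight vectors are all parallel to $\mathbf{1}$ and whose biases are spread across the thresholds $0,1,\ldots,n$, with the output obtained as a linear read-out $\sum_i a_i \phi_i(|x|)$ of the resulting hidden features. I would initialize the first layer at exactly this symmetric configuration and initialize the read-out layer to zero. The reason the zero read-out is crucial is that the gradient of the (surrogate) loss with respect to any first-layer parameter is proportional to the corresponding read-out weight, so at the start the first layer is stationary and thereafter moves only as fast as the read-out weights grow --- a self-limiting coupling that keeps the first layer close to its symmetric initialization throughout.

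The core of the argument is to track these two coupled motions simultaneously. As long as the first layer stays in a small neighborhood of $\mathbf{1}$, the feature matrix obtained by evaluating the hidden units at the $n+1$ attainable values of $|x|$ is well-conditioned, so the read-out layer is effectively solving a well-separated linear problem whose SGD iterates contract toward the target coefficients $(a_i)$. Conversely, the displacement of the first layer is bounded by the step size times the accumulated magnitude of the read-out weights, which stays small. I would combine these by an inductive potential-function argument: assuming the first layer has remained in its neighborhood through step $t$, the read-out error contracts at step $t+1$, and this in turn keeps the first-layer displacement within the neighborhood; choosing $h = 1/\text{poly}(n)$ and iterating $\text{poly}(n)$ times then drives the empirical error to zero.

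The step I expect to be the main obstacle is exactly this interaction between the two layers --- showing that the two effects never reinforce one another badly, i.e.\ that the read-out never grows large enough to drag the first layer out of the region where the features stay well-conditioned, while the features never degrade enough to stall the read-out's convergence. The delicate point is that, since tiny $h$ forces on the order of $1/h$ steps to achieve contraction, the total first-layer displacement is essentially independent of $h$, so it cannot simply be made negligible by shrinking the step size; one must instead show that the features are robust to an $O(1)$ first-layer perturbation, using that the thresholds sit strictly between consecutive integers. Making this quantitative requires bounding the condition number of the feature matrix as a function of the first-layer displacement and tying the admissible displacement to the read-out's contraction rate; the resulting constants are what fix the polynomial degrees in the step size and the number of updates. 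A secondary nuisance is the non-smoothness of $\ReLU$ and the saturation of $\sig$, which I would dispatch by the same placement of thresholds, so that at every attainable input $|x| \in \{0,\ldots,n\}$ each unit operates where $\ReLU$ is linear and $\sig$ is strictly monotone and bounded away from saturation.
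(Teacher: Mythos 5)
Your decomposition (optimization to zero empirical error, then generalization via the VC dimension $n+1$ of functions of $|x|$, checking that the trained network's sign still factors through $|x|$) matches the paper's, and the generalization half is essentially identical to what the paper does. The problem is in the optimization core, and specifically in the step you yourself flag as the main obstacle. You assert that tiny $h$ forces on the order of $1/h$ steps, so the total first-layer displacement is $\Theta(1)$ independent of $h$, and you therefore propose to prove that the feature map is robust to an $O(1)$ perturbation of the first layer. That robustness claim is both unnecessary and very likely false as stated: the biases sit at half-integer thresholds spaced $1/2$ apart, so an $O(1)$ perturbation of $W,B$ can completely scramble which units fire on which Hamming weights. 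An argument routed through well-conditioning of the feature matrix under $O(1)$ perturbations would not go through.

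The missing idea is that the read-out dynamics under the hinge loss are \emph{exactly} the (modified) perceptron algorithm, and the confidence parameter is tied to the step size, $\beta = R^2 h$. With that choice the mistake bound of Theorem~\ref{R-perc} gives a number of \emph{updates} $t = O(R^2/\gamma^2)$ that is independent of $h$ (there is no $1/h$ blow-up, because one does not need $M$ to contract to the target coefficients $(a_i)$ --- any separator with margin $\gamma/3$ suffices, and the perceptron finds one of norm only $O(Rh\sqrt{t})$). Since each first-layer update is proportional to $\|M^{(t)}\| h = O(Rh^2\sqrt{t})$, the total displacement is $O(R_X^2 R h^2 t^{3/2})$, which \emph{does} vanish as $h \to 0$; with $\gamma = \Omega(1/n)$ from Lemma~\ref{lem:part} and $h = O(1/n^6)$ it is $O(1/n^2) \ll \gamma$. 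This is the content of Lemma~\ref{embed}, and it is what lets the paper keep the embedding within a $0.05\gamma^2$ ball of its symmetric initialization throughout training, closing the induction you describe without ever needing robustness to macroscopic perturbations. Your contraction-toward-$(a_i)$ framing is what creates the $1/h$-steps problem; replacing it with the margin/mistake-bound analysis dissolves it.
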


The number of parameters in the network described in 
Theorem~\ref{main} is $\Omega(n^2)$. So in general one could expect overfitting when the sample size is as small as $O(n)$. 
Nevertheless, the theorem provides generalization guarantees,
even for such a small sample size.

The initialization phase plays an important role in proving Theorem \ref{main}. To emphasize this, we report an empirical phenomenon
(this is ``folklore'').
We show that a network cannot learn parity from a random initialization (see Section \ref{par_hard}). On one hand, if the network size is big, we can bring the empirical error to zero (as suggested in \cite{no_bad}), but the true error is close to $1/2$. On the other hand, if its size is too small, the network is not even able to achieve small empirical error (see Figure \ref{fig_par}). We observe a similar phenomenon also for a random symmetric function. 
An open question remains: why is it true that a sample of size polynomial in $n$ does not suffice to learn parity (with random
initialization)? 


A similar phenomenon was theoretically explained
by \cite{Shamir} and \cite{SongVW}.
The parity function belongs to the class 
of all parities $$\CP =\CP_n=\{\pi_{s}(x)=(-1)^{s \cdot x} : s\in X\}$$
where $\cdot$ is the standard inner product.
This class is efficiently PAC-learnable with $O(n)$ samples using Gaussian elimination. 
A continuous version of $\CP$ 
was studied by \cite{Shamir} and \cite{SongVW}. To study the training phase,
they used a generalized notion of \emph{statistical queries} (SQ);
see \cite{Kearns}. In this framework, 
they show that most functions in the class ${\CP}$ cannot be efficiently learned
(roughly stated, learning the class requires 
an exponential amount of resources). 
This framework, however, does not seem to capture 
actual training of neural networks using SGD. For example, 
it is not clear if one SGD update
corresponds to a single query in this model.
In addition,
typically one receives a dataset and performs the training by going over it many times, whereas
the query model estimates the gradient using a fresh batch of samples in each iteration. 
The query model also assumes the noise 
to be adversarial, an assumption that does not necessarily hold in reality.
Finally, the SQ-based lower bound
holds for every initialization
(in particular, for the initialization we use here),
so it does not capture the efficient training
process Theorem~\ref{main} describes.

Theorem~\ref{main} shows, however, that
with symmetry-based initialization, parity can be efficiently learned.
So, in a nutshell, parity can not be learned as part of ${\CP}$,
but it can be learned as part of $\CS$.
One could wonder why  the hardness proof for $\CP$ cannot be applied for $\CS$ as both classes consist of many input sensitive functions.  The answer lies in the fact that ${\CP}$ has a far bigger statistical dimension than $\CS$ (all functions in $\CP$ are orthogonal to each other, unlike $\CS$). 

The proof of the theorem utilizes the different behavior of the
two layers in the network. 
SGD is performed using a step size $h$ that is polynomially small in $n$.
The analysis shows that in a polynomial number of steps
that is {\em independent} of the choice of $h$
the following two properties hold:
(i) the output neuron reaches a ``good'' state and
(ii) the hidden layer does not change in a ``meaningful'' way.
These two properties hold when $h$ is small enough.
In Section \ref{over}, we experiment with large values
of $h$. We see that, although the training error is zero, the 
true error becomes large.


Here is a high level description of the proof.
The $\ell$ neurons in the hidden layer define
an ``embedding'' of the inputs space $X = \{0,1\}^n$
into $\R^\ell$ (a.k.a.\ the feature map). This embedding changes in time
according to the training examples and process.
The proof shows that if 
at any point in time this embedding 
has good enough margin,
then training with standard SGD quickly converges.
This is explained in more detail in Section~\ref{sec:train}.
It remains an interesting open problem to understand
this phenomenon in greater generality,
using a cleaner and more abstract language.

%

\subsection{Background}

To better understand
the context of our research, 
we survey previous related works.


\In{The expressiveness and limitations of neural networks were studied in several works such as \cite{recht, telg, eldan} and \cite{ABMM}.}
Constructions of small $\ReLU$ networks for the parity function
appeared in several previous works, such as
 \cite{parity_arch}, \cite{parity_arch2}, \cite{parity_arch3}
 and \cite{parity_arch4}. 
Constant depth circuits for the parity function were also studied in the 
context of computational complexity theory,
see for example \cite{Par1}, \cite{Par2} and \cite{Par3}. 

  
%
The training phase of neural networks was also studied in many works. Here we list several works that seem
most related to ours. \cite{daniely} analyzed SGD for general neural network architecture and showed that the training error can be nullified, e.g., for the class of bounded degree polynomials
(see also~\cite{andoni}). \cite{jacot} studied neural tangent kernels (NTK), an infinite width analogue of neural networks. \cite{du} showed that randomly initialized shallow $\ReLU$ networks nullify the training error, as long as the number of samples is smaller than the number of neurons in the hidden layer. Their analysis only deals with optimization over the first layer (so that the weights of the output neuron
are fixed). \cite{chizat} provided another analysis 
of the latter two works. \cite{allen} showed
that over-parametrized neural networks can achieve zero training error, as as long as the data points are not too close to one another and the 
weights of the output neuron are fixed. \cite{zou} provided guarantees for zero training error, assuming the two classes are separated by a positive margin.

Convergence and generalization 
guarantees for neural networks were 
studied in the following works. \cite{sgd_lsd} studied linearly separable data. \cite{sgd_lsd2} studied well separated distributions. \cite{allen_g} gave generalization guarantees in expectation for SGD.  \cite{arora} gave data-dependent generalization bounds for GD.
All these works optimized only over the hidden layer (the output layer is fixed after initialization).

Margins play an important role in learning,
and we also use it in our proof.
\cite{marg_nn_t1}, \cite{marg_nn_t2}, \cite{marg_nn_t3} and \cite{marg_nn_t4} gave generalization bounds for neural networks that are based on their margin when the training ends. From a practical perspective, \cite{max_nn_app1},
\cite{max_nn_app2} and \cite{max_nn_app4} suggested different training algorithms that optimize the margin.

As discussed above, it seems difficult for neural networks to learn parities. \cite{SongVW} and \cite{Shamir} demonstrated this using the language statistical queries
(SQ).
This is a valuable language,
but it misses some central aspects of training neural networks.
SQ seems to be closely related to GD,
but does not seem to capture SGD.
SQ also shows that many of the parities functions
$\otimes_{i \in S} x_i$ are difficult to learn,
but it does not imply that {\em the} parity function
$\otimes_{i \in [n]} x_i$ is difficult to learn.
 \cite{deep_limit} demonstrated 
a similar phenomenon in a setting that is closer to the ``real life'' mechanics of neural networks.

We suggest that taking the symmetries of the learning problem into account can make the difference between failure and success. 
Several works 
suggested different neural architectures that take symmetries into account;
see \cite{deepsets}, \cite{deep_sym}, and \cite{deep_equi}.

\section{Representations}

Here we describe efficient representations for symmetric
functions by network with one hidden layer. These representations 
are also useful later on, when we study the training process.
We study two different activation functions,
$\sig$ and $\ReLU$
(similar statement can be proved for other activations,
like $\arctan$). Each activation function 
requires its own representation,
as in the two lemmas below.

 \begin{figure}[ht]
\vskip 0.1in
\begin{center} 
\centerline{\includegraphics[ scale=0.4]{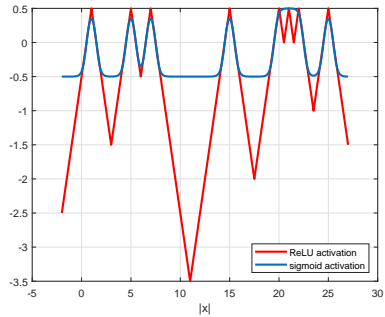}}
\caption{Approximations of the symmetric function $f_A
=\sign(\sum_{i\in A}\1 _{|x|=i}- 0.5)$ by $\sig$ and $\ReLU$ activations 
for $A=\{1, 5, 7, 15, 20, 21, 22, 25  \}$.
}
\label{app_exm}
\end{center}
\vskip -0.1in
\end{figure}

\subsection{Sigmoid}
%

We start with the activation $\sigma(\xi)=\frac{1}{1+\exp(-\xi)}$,
since it helps to understand the construction
for the $\ReLU$ activation.
The building blocks of the symmetric functions
are indicators of $|x|=i$ for $i \in \{0,1,\ldots,n\}$.
An indicator function is essentially a sum of two $\sig$ functions: 
$$\sign(\1_{|x| = i}-0.5)=\sign(\Delta_i-0.5),$$
where
$\Delta_i(x)= \s (5(|x| - i+0.5)) + \s (5(i+0.5 - |x|))-1$.

\begin{lemma}\label{sig_lem}
The symmetric function $f_A$ satisfies $f_A(x)=\sign(-0.5+ \sum_{i\in A}  \Delta_i(x))$.
\end{lemma}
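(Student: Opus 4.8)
The plan is to treat each $\Delta_i$ as a smooth ``bump'' that approximates the indicator $\1_{|x|=i}$, and to show that summing these bumps over $i \in A$ reproduces the threshold behaviour of $f_A$. Fix an input $x$ and write $k = |x| \in \{0,1,\ldots,n\}$. Since each summand depends on $x$ only through $k$, I would first record that $\Delta_i$ depends only on the integer offset $t = k-i$; using the identity $\sigma(z)+\sigma(-z)=1$ one rewrites
\[ \Delta_i(x) = \sigma(5t+2.5) - \sigma(5t-2.5), \]
a difference of the strictly increasing sigmoid evaluated at two points a distance $5$ apart. Two facts then drop out immediately: $\Delta_i(x) > 0$ for every $i$, and at the peak $t=0$ one has $\Delta_i(x) = 2\sigma(2.5)-1$. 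Recall that $f_A(x)=+1$ exactly when $k \in A$ and $f_A(x)=-1$ otherwise (the relevant indicator in $\sum_{i\in A}\1_{|x|=i}$ fires precisely when $k\in A$), so it suffices to compare $\sum_{i\in A}\Delta_i(x)$ with the threshold $1/2$ in these two regimes.

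In the first regime, $k \in A$, I would isolate the peak term $i=k$. Since $\sigma(2.5) > 3/4$, the peak alone satisfies $\Delta_k(x) = 2\sigma(2.5)-1 > 1/2$, and because every other $\Delta_i(x)$ is nonnegative the full sum stays above $1/2$; hence $\sign\!\big(-0.5+\sum_{i\in A}\Delta_i(x)\big) = +1 = f_A(x)$.

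The second regime, $k \notin A$, is where the real work lies and is the step I expect to be the main obstacle. Now every $i \in A$ has offset $|t| = |k-i| \ge 1$, so there is no peak term, and I must argue that the accumulated ``leakage'' of all the bumps cannot reach $1/2$, even though the sum may contain up to $n$ terms. Rewriting $\Delta_i(x) = \sigma(2.5-5t) - \sigma(-2.5-5t)$ and using $\sigma(z) < e^{z}$ for $z<0$, each term with offset $t \ge 1$ is bounded by $e^{2.5-5t}$, and by the symmetry $\Delta_i(t)=\Delta_i(-t)$ the same bound holds for $t \le -1$. Since distinct $i \in A$ give distinct nonzero offsets, summing the resulting geometric series over all nonzero integer offsets gives
\[ \sum_{i\in A}\Delta_i(x) \;\le\; 2\sum_{t \ge 1} e^{2.5-5t} \;=\; \frac{2e^{2.5}}{e^{5}-1} \;<\; \tfrac{1}{2}, \]
the decisive point being that the decay factor $e^{-5}$ per unit offset makes this tail a small absolute constant, independent of $n$. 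Consequently $\sign\!\big(-0.5+\sum_{i\in A}\Delta_i(x)\big) = -1 = f_A(x)$, and combining the two regimes proves the lemma. The only quantitative facts one must verify carefully are the peak estimate $\sigma(2.5)>3/4$ and the geometric tail bound above; everything else is the clean sigmoid algebra of the first paragraph.
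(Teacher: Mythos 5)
Your proof is correct and follows essentially the same route as the paper's: for $|x|\in A$ you isolate the peak term $\Delta_k(x)=2\sigma(2.5)-1>1/2$ and use nonnegativity of the rest, and for $|x|\notin A$ you bound the leakage by the geometric series $2\sum_{t\ge 1}e^{2.5-5t}=2e^{-2.5}/(1-e^{-5})<1/2$, which is exactly the paper's bound ($<0.17$). The only cosmetic difference is that you first rewrite $\Delta_i$ as a difference of two sigmoids via $\sigma(z)+\sigma(-z)=1$, which tidies the algebra but changes nothing of substance.
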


A network with one hidden layer of $2n+3$ neurons with $\sig$ activations is sufficient to represent any symmetric function.

\begin{proof}
For all $k\in A$ and $x\in X$ of weight $k$, 
 \[\sum_{i\in A} \Delta_i(k)\geq \Delta_k(x)  = 2\s (5\cdot 0.5)-1 > 0.84 ; \] 
the first inequality holds since $\Delta_i(x)\geq 0$ for all $i$ and $x$. 
For all $k \not \in A$ and $x\in X$ of weight $k$, 
\begin{align*}
\sum_{i\in A} \Delta_i(x) 
& =  
\sum_{k < i\in A} \s (5\cdot( k -i + 0.5)) 
 +  \sum_{k > i\in A} \s (5\cdot(  i + 0.5  - k)) \\ 
 & +  \sum_{k < i\in A} \left[ \s (5\cdot(  i + 0.5  - k))-1\right]   + \sum_{k > i\in A} \left[ \s (5\cdot( k -i + 0.5))-1 \right]   \\ 
&  <   \sum_{k < i\in A} \s (5\cdot( k -i + 0.5)) +  \sum_{k > i\in A} \s (5\cdot(  i + 0.5  - k)) \\  
& <  
 2 \sum_{i=1}^{\infty} \exp( 5\cdot(  -i + 0.5) )\\ 
 & =    2\exp(-2.5)/(1-\exp(-5))  <0.17;
\end{align*}
the first equality follows from the definition, the first inequality neglects  the negative sums, and the second inequality follows 
because $\exp(\xi) > \s(\xi) $ for all $\xi$.

%

\end{proof}


\subsection{ReLU}

An indicator function  can be represented using $ \ReLU(\xi)=\max \{ 0,\xi \}$
as
$
\sign( \G _i+0.5)$,
where 
$$\Gamma_i(x)= -\ReLU (|x|- i) - \ReLU (i - |x|) .$$
A natural idea is to take a linear combination (similarly to the $\sig$) to get 
general functions in $\CS$. 
However, this fails because the $\ReLU$ function is unbounded.  
The following lemma states the needed correction.

\begin{lemma}\label{re_lem}
Let $A = \{i_1<i_2<...<i_t\} \subseteq [n]$ for $t >1$.
Define
$B=\{ (i_1+i_2)/2,...,(i_{t-1}+i_t)/2 \}$.
The symmetric function 
$$f_A= \sign(- 0.5 + \sum_{i\in A}\1 _{|x|=i})$$
can be represented as
$$f_A = \sign(  (i_t-i_1)/2 +0.5+\sum_{i\in A} \G _i - \sum_{i\in B} \G _i).$$
\end{lemma}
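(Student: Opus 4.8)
The plan is to reduce the statement to a one‑dimensional identity in the weight $k=|x|\in\{0,1,\dots,n\}$. First I would unwind $\G_i$: for $x$ of weight $k$ at most one of the two $\ReLU$ terms is nonzero, and one checks $\G_i(x)=-|k-i|$, the negated distance from $k$ to $i$. Thus the argument of the sign in the lemma depends on $x$ only through $k$, and the whole statement follows from the identity
\[
\sum_{i\in A}\G_i(x)-\sum_{i\in B}\G_i(x)\;=\;-\tfrac{i_t-i_1}{2}-\min_{i\in A}|k-i| .
\]
Indeed, substituting this into the expression inside the sign gives the value $0.5-\min_{i\in A}|k-i|$, which, since $k$ and the elements of $A$ are integers (so the minimum is an integer), is positive exactly when $k\in A$ and at most $-0.5$ otherwise — precisely $f_A$.

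To prove the identity, set $F(k)=-\sum_{i\in A}|k-i|+\sum_{i\in B}|k-i|$ and view it as a piecewise‑linear function of a real variable $k$. Writing $m_j=(i_j+i_{j+1})/2$ for the elements of $B$, the breakpoints interlace as $i_1<m_1<i_2<\cdots<m_{t-1}<i_t$. Each $A$‑term contributes slope $-\sign(k-i)$ and each $B$‑term slope $+\sign(k-i)$, so the slope of $F$ equals $+1$ for $k<i_1$, drops by $2$ as $k$ passes an $A$‑point and rises by $2$ as it passes a $B$‑point. Hence the slope alternates between $+1$ and $-1$, making $F$ a ``tent'' with local maxima exactly at the points of $A$ and local minima exactly at the midpoints in $B$; in particular $F$ has slope $\pm1$ everywhere.

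The crux is that all the maxima coincide. Between consecutive $A$‑points, $F$ falls by $(i_{j+1}-i_j)/2$ with slope $-1$ from $i_j$ to $m_j$ and then rises by the same amount with slope $+1$ from $m_j$ to $i_{j+1}$, so $F(i_{j+1})=F(i_j)$; thus $F$ takes a common value $P$ at every $A$‑point. Evaluating at $k=i_1$ and using $\sum_{j}m_j=\sum_j i_j-(i_1+i_t)/2$ yields $P=-(i_t-i_1)/2$. Combined with the tent shape (whose troughs descend exactly to the midpoints, which are the points farthest from $A$ in each gap), this gives $F(k)=P-\min_{i\in A}|k-i|$ for all $k$, which is the desired identity.

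I expect the only genuine work to be the equal‑height computation: showing $F(i_{j+1})=F(i_j)$ and pinning down the constant $P=-(i_t-i_1)/2$ is exactly where the choice of $B$ as the consecutive midpoints is used, and it is the step that must be carried out carefully so that the additive constant $(i_t-i_1)/2$ in the lemma comes out exactly. The remaining ingredients — the formula $\G_i(x)=-|k-i|$, the alternating‑slope bookkeeping, and the integrality argument that turns $0.5-\min_{i\in A}|k-i|>0$ into $k\in A$ — are routine.
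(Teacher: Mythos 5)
Your proposal is correct and is essentially the paper's own argument: both analyze the expression as a piecewise-linear function of $k=|x|$ with slopes alternating between $-1$ and $+1$ on the intervals cut out by $A$ and the midpoints $B$, deduce that all values at points of $A$ coincide, and compute that common value to be $-(i_t-i_1)/2$. The only cosmetic differences are that you compute the constant by a direct telescoping sum where the paper uses induction on $\{i_1,\dots,i_m\}$, and that you package the conclusion as the clean identity $F(k)=-(i_t-i_1)/2-\min_{i\in A}|k-i|$, which the paper leaves implicit in the final inequality $F_A(k)\le F_A(i_1)-1$ for $k\notin A$.
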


The lemma shows that a network with one hidden layer of $4n+3$ $\ReLU$ neurons is sufficient to represent any function
in $\CS$.
The coefficient of the $\ReLU$ gates are
$\pm 1$ in this representation.

\begin{proof} 
The proof proceeds in two parts. 
The first part shows the function $ (i_t-i_1)/2 +0.5+\sum_{i\in A} \G_i - \sum_{i\in B} \G_i$ is constant for all $x\in X$ so that $|x| \in A$. The second part shows that this function equals $0.5$ for all $x$ so that $|x| \in A$ and that it is negative for all $x\in X$ that satisfy $|x| \notin A$.

For the first part, denote by $F_A(j)$
the value of the symmetric function $\sum_{i\in A} \G_i - \sum_{i\in B} \G_i$ on inputs of weight $j$. 
By induction, assume that
$F_A(i_1)=...=F_A(i_m)$ for some $1 \leq m <t$.  
Think of $F_A$ as a univariate function of
the real variable $\xi$.
This function is differentiable 
for all $i_m< \xi <(i_m+i_{m+1})/2$: 
\begin{align*}
F_A'(\xi) & = -\left[ R'(\xi-i_m)-R'(i_m-\xi) \right]\\
 & - \sum_{i_{m}>i\in A} \left[ R'(\xi-i)-R'(i-\xi) \right] 
 -\sum_{i_{m}<i\in A} \left[ R'(\xi-i)-R'(i-\xi) \right] \\
&  +  \sum_{i_{m}>i\in B} \left[ R'(\xi-i)-R'(i-\xi) \right] 
  + \sum_{i_{m}<i\in B} \left[ R'(\xi-i)-R'(i-\xi) \right] \\
& =   
- R'(\xi-i_m) -\sum_{i_{m}>i\in A}  R'(\xi-i) +\sum_{i_{m}<i\in A}  R'(i-\xi)\\
&   +    \sum_{i_{m}>i\in B} R'(\xi-i) - \sum_{i_{m}<i\in B} R'(i-\xi) \\
& =      -1 ;
\end{align*} 
the first equality follows from the definition of $\G_i$, the second equality follows from the definition of the $\ReLU$ function, and the last equality holds since the first and third sum cancel each other and  the second and fourth sum as well. 
In a similar manner, 
for all $(i_m+i_{m+1})/2<\xi<i_{m+1}$,
we have $F_A'(\xi) = 1$.
So, integrating over $\xi$ concludes the induction $F_A(i_{m+1})=F_A(i_{m})+\int_{i_m}^{i_{m+1}} F_A'(\xi)d\xi=F_A(i_{m})$.

For the second part, 
we start by proving that $F_{A}(i_1)= -(i_t-i_1)/2$.
Let $A_m=\{i_1,...,i_m\}$. 
By definition, $\G_{i_1}(i_1)=0$. 
For $A_2$, we have
$$F_{A_2}(i_1)=\G_{i_2}({i_1})-\G_{(i_1+i_2)/2}({i_1})=-(i_2-i_1)+[(i_2+i_1)/2-i_1]=-(i_2-i_1)/2.$$ 
Induction on $m$ can be used to prove
that $F_{A}(i_1)= -(i_t-i_1)/2$.
Now, by the derivatives calculated in the first part, for $k \notin A$ it holds that $F_A(k) \leq F_A(i_1)-1$.

\end{proof}

\section{Training and Generalization}\label{sec:train}

The goal of this section is to describe
a small network with one hidden layer
that (when initialized property)
efficiently learns symmetric functions
using a small number of examples
(the training is done via SGD).

\subsection{Specifications}

Here we specify the architecture, initialization
and loss function that is implicit in our main result
(Theorem~\ref{main}).

To guarantee convergence of SGD, we need to start with ``good'' initial conditions. The initialization we pick depends
on the activation function it uses, and is chosen with resemblance to Lemma
~\ref{re_lem} for $\ReLU$.   
On a high level, this indicates that understanding the class
of functions we wish to study in term of ``representation''
can be helpful when choosing the architecture of a neural network
in a learning context.

The network we consider has one hidden layer.
We denote by $w_{ij}$ the weight between coordinate $j$ of the input and neuron $i$ in the hidden layer. We denote $W$ this matrix of weights.
We denote by $b_i$ the bias of neuron $i$ of the hidden layer. We denote $B$ this vector of weights.
We denote by $m_{i}$ is the weight from neuron $i$ in the hidden layer to the output neuron. We denote $M$ this vector of weights.
We denote by $b$ the bias of the output neuron.

Initialize the network as follows:
 The dimensions of  $W$ are $(4n+2) \times n$.  For all $1\leq i \leq (4n+2)$ and $1\leq j \leq n $,
we set  
$$w_{ij}= (-1)^{i+1} \quad
\text{and} \quad b_i=0.5(-1)^i  \lfloor (i-1)/2 \rfloor .$$
We set $M=0$ and $b=0$.

To run SGD, we need to choose a loss function.
We use the \emph{hinge loss}, $$L(x,f)=\max\{0, -f(x)(v_x\cdot M+b)+\beta\},$$ where $v_x = \ReLU(W x+B)$ is the output of the hidden layer on input $x$ and $\beta>0$ is a parameter of confidence.


 \subsection{Margins}\label{margin}
 
A key property in the analysis is the `margin' of the hidden layer with respect to the function being learned.

A map $Y: V \a \{\pm 1\}$ over a finite set $V \con \mathbb{R} ^d $ is linearly\footnote{A standard ``lifting'' that adds a coordinate with $1$
to every vector allows to translate the affine case to the linear case.}
separable if there exists {$w\in\RR^d$} 
such that $\sign(w \cdot v)=Y(v)$ for all $v\in V$.
When the Euclidean norm of $w$ is $\|w\|=1$,
the number $\marg (w,Y) = {  \min_{v\in V} {Y(v) w\cdot v }}$ is the margin 
of $w$ with respect to $Y$.
The number $\marg(Y) = \sup_{w \in \RR^d : \|w\|=1} \marg(w,Y)$
is the margin of $Y$. 

We are interested in the following set $V$ in $\RR^d$.
Recall that $W$ is the weight matrix between the input layer and the hidden layer, and that $B$ is the relevant bias vector.
Given $W,B$,
we are interested in the set $V=\{ v_x : x\in X\}$,
where $v_x = \ReLU(W x+B)$.
In words, we think of the neurons in the hidden layer
as defining an ``embeding'' of $X$ in Euclidean space.
A similar construction works for other activation functions.
We say that $Y : V \to \{\pm 1\}$ agrees with $f\in \CS$ if for all $x\in X$ it holds that $Y(v_x)=f(x)$.

The following lemma bounds from below the margin of the initial $V$.

\begin{lemma}
\label{lem:part}
If $Y$ is a partition that agrees with some function in $\CS$
for the initialization described above 
then $\marg(Y) \geq \Omega (1/n)$.
\end{lemma}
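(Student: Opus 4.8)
The plan is to produce an explicit separating direction for the initial embedding by reusing the $\ReLU$ representation of Lemma~\ref{re_lem}; the $\Omega(1/n)$ bound will emerge as the ratio of a constant functional margin to an $O(n)$ weight norm. First I would compute $v_x=\ReLU(Wx+B)$ explicitly. Since every row of $W$ is constant, $w_{ij}=(-1)^{i+1}$, the pre-activation of neuron $i$ is $(-1)^{i+1}|x|+0.5(-1)^i\lfloor(i-1)/2\rfloor$. Splitting into odd and even $i$ and setting $k=\lfloor(i-1)/2\rfloor$, the odd neurons output $\ReLU(|x|-k/2)$ and the even neurons output $\ReLU(k/2-|x|)$, with $k$ ranging over $\{0,1,\ldots,2n\}$. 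Thus $v_x$ depends only on $|x|$, giving at most $n+1$ distinct points $u_0,\ldots,u_n$ (with $u_j=v_x$ when $|x|=j$), and its coordinates are exactly the atoms $\ReLU(|x|-i)$ and $\ReLU(i-|x|)$ at every integer and half-integer $i\in\{0,1/2,1,\ldots,n\}$ --- precisely the atoms from which the $\Gamma_i$ of Lemma~\ref{re_lem} are assembled.

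Next I would read the separator off Lemma~\ref{re_lem}. For a target $f=f_A\in\CS$ with $|A|\geq 2$, the lemma writes $f_A=\sign\bigl((i_t-i_1)/2+0.5+\sum_{i\in A}\Gamma_i-\sum_{i\in B}\Gamma_i\bigr)$, and since each $\Gamma_i=-\ReLU(|x|-i)-\ReLU(i-|x|)$ is a $\pm1$ combination of two coordinates of $v_x$, the argument of the $\sign$ is an affine function of the embedding. The lifting of the footnote (append a coordinate equal to $1$) turns this into a linear functional $w\cdot u_j$. The proof of Lemma~\ref{re_lem} already shows this function equals $+0.5$ on every level in $A$ and is at most $-0.5$ on every level outside $A$, so the functional margin $\min_j Y(u_j)\,(w\cdot u_j)$ is at least $0.5$.

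It then remains to bound $\|w\|$. The separator places a $\pm1$ coefficient on the at most $2|A|+2|B|\leq 4n$ coordinates touched by the $\Gamma_i$, while the appended coordinate carries the constant $(i_t-i_1)/2+0.5=O(n)$; hence $\|w\|^2\leq 4n+O(n^2)=O(n^2)$ and $\|w\|=O(n)$. Dividing the functional margin by $\|w\|$ gives a geometric margin of at least $0.5/\|w\|=\Omega(1/n)$, so $\marg(Y)\geq\Omega(1/n)$. The few labelings not covered by Lemma~\ref{re_lem} --- $|A|\leq1$, $A=\emptyset$, or a label placed on level $0$ --- I would dispatch by an explicit separator of the same $\Omega(1/n)$ quality.

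The step I expect to be most delicate is the first: verifying that the prescribed biases $0.5(-1)^i\lfloor(i-1)/2\rfloor$ align the $4n+2$ neurons exactly with the integer and half-integer atoms of the $\Gamma_i$, so that the representation of Lemma~\ref{re_lem} is genuinely available inside this single fixed embedding. Once that index bookkeeping is pinned down, the functional-margin and norm estimates follow directly from the already-proved lemma.
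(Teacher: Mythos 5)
Your proposal is correct and follows essentially the same route as the paper: both read the separating functional off the explicit representation of Lemma~\ref{re_lem}, observe that the resulting weight vector (with the lifted bias coordinate) has norm $O(n)$ while the functional margin is a constant, and divide. The paper's version is just terser --- it quotes a functional margin of $0.25$ and $\Vert(M,b)\Vert=O(n)$ without spelling out the index bookkeeping you carefully verify --- so your write-up is a faithful, more detailed expansion of the same argument.
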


\begin{proof}
By Lemmas \ref{sig_lem} and \ref{re_lem}, we see that any function in $\CS$ can be represented with a vector of weights $M\in [-1,1]^{\Theta(n)}$ of the output neuron together with a bias $b \in [-(n+1),n+1]$. 
These $M,b$ induce a partition $Y$ of $V$.
Namely, $Y(v_x) M\cdot v_x + b >0.25$ for all $x \in X$.
Since $\Vert(M,b)\Vert=O(n)$ we have our desired result.  
\end{proof}

\subsection{Freezing the Hidden Layer}

Before analyzing the full behavior of SGD, 
we make an observation:
if the weights of the hidden layer are fixed
with the initialization described above, 
then Theorem~\ref{main} holds for SGD with batch size $1$. 
{This observation, unfortunately, does
not suffice to prove Theorem~\ref{main}.
In the setting we consider, the training of the neural network
uses SGD without fixing any weights.
This more general case is handled in the next section.}
The rest of this subsection is devoted for
explaining this observation.

\cite{Nov} showed that that the perceptron algorithm \cite{Rosenblatt}
makes a small number of mistakes
for linearly separable data with large margin.
For a comprehensive survey of the perceptron algorithm and its variants, see~\cite{perc_comp}.

Running SGD with the hinge loss induces the same update rule
as in a modified  
perceptron algorithm, Algorithm~\ref{alg:Per}.



\begin{algorithm}
\caption{The modified perceptron algorithm}
 \label{alg:Per}
%
%

\begin{algorithmic}
 \STATE\textbf{Initialize:} $w^{(0)}=\vec{0}$, $t=0$, $\beta>0$ and $h>0$
 \WHILE{$\exists v \in V$ with $Y(v) w^{(t)} \cdot v \leq \beta $} 
 \STATE{$w^{(t+1)}=w^{(t)}+Y(v) v h$} 
 \STATE{$t = t+1$}
 \ENDWHILE 
 
 
 \textbf{return} $w^{(t)}$

 \end{algorithmic}

\end{algorithm}


Novikoff's proof can be generalized to any $\beta>0$ and batches of any size to yield the following theorem; see 
\cite{perc_mod1,perc_mod2} and appendix \ref{proof_perc}.

\begin{theorem}\label{R-perc}
For $Y : V \to \{\pm 1\}$ with margin $\gamma>0$ 
and step size $h>0$, the modified perceptron algorithm performs at most $\frac{2\beta h +(Rh)^2}{(\gamma h)^2}$ updates and achieves a margin of at least $\frac{\gamma  \beta h}{2\beta h+(Rh)^2} $,
where $R = \max_{v \in V} \|v\|$.
\end{theorem}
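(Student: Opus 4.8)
The plan is to run the classical potential-function argument of Novikoff, carrying the step size $h$ and the confidence parameter $\beta$ through the two standard estimates. Since $Y$ has margin $\gamma > 0$, I would first fix a unit vector $w^*$ with $\|w^*\| = 1$ and $Y(v)\, w^* \cdot v \geq \gamma$ for every $v \in V$; such a $w^*$ exists by the definition of $\marg(Y)$. The entire proof then reduces to tracking two scalar quantities across the updates, starting from $w^{(0)} = \vec 0$: the correlation $w^{(t)} \cdot w^*$, bounded from below, and the squared norm $\|w^{(t)}\|^2$, bounded from above.

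For the lower bound, each update replaces $w^{(t)}$ by $w^{(t)} + Y(v) v h$, so $w^{(t+1)} \cdot w^* = w^{(t)} \cdot w^* + h\, Y(v)\, v \cdot w^* \geq w^{(t)} \cdot w^* + h\gamma$, whence after $T$ updates $w^{(T)} \cdot w^* \geq T h \gamma$. For the upper bound, which is the only place the loop condition enters, I would expand $\|w^{(t+1)}\|^2 = \|w^{(t)}\|^2 + 2h\, Y(v)\, w^{(t)} \cdot v + h^2 \|v\|^2$. An update is performed only when $Y(v)\, w^{(t)} \cdot v \leq \beta$, and $\|v\| \leq R$ by definition of $R$, so each update increases the squared norm by at most $2\beta h + (Rh)^2$; after $T$ updates $\|w^{(T)}\|^2 \leq T(2\beta h + (Rh)^2)$.

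Combining the two estimates through Cauchy--Schwarz gives $T h \gamma \leq w^{(T)} \cdot w^* \leq \|w^{(T)}\| \leq \sqrt{T(2\beta h + (Rh)^2)}$; squaring and cancelling one factor of $T$ yields the claimed update bound $T \leq (2\beta h + (Rh)^2)/(\gamma h)^2$. For the margin guarantee, I would observe that upon termination the while-condition fails, so $Y(v)\, w^{(t)} \cdot v > \beta$ for every $v$, and hence the normalized classifier $w^{(t)}/\|w^{(t)}\|$ has margin strictly greater than $\beta / \|w^{(t)}\|$. Feeding the update-count bound back into $\|w^{(T)}\|^2 \leq T(2\beta h + (Rh)^2)$ produces the worst-case estimate $\|w^{(t)}\| \leq (2\beta h + (Rh)^2)/(\gamma h)$, which gives a margin of at least $\gamma\beta h /(2\beta h + (Rh)^2)$, as claimed.

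I do not expect a genuine obstacle here, as this is Novikoff's theorem with two extra parameters threaded in. The only points demanding care are bookkeeping: correctly attributing $\beta$ to the loop condition in the norm bound and $\gamma$ to $w^*$ in the correlation bound, and — for the margin claim — chaining the update-count estimate back into the norm bound rather than using the raw per-step growth, since it is the former that produces the stated denominator $2\beta h + (Rh)^2$.
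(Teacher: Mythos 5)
Your proposal is correct and follows essentially the same route as the paper's own proof in the appendix: the Novikoff two-potential argument (lower-bounding $w^{(t)}\cdot w^*$ by $\gamma h t$, upper-bounding $\|w^{(t)}\|^2$ by $(2\beta h+(Rh)^2)t$ via the loop condition, combining with Cauchy--Schwarz), followed by deriving the margin guarantee from the termination condition $Y(v)\,w^{(t)}\cdot v>\beta$ together with the worst-case norm bound at the maximal update count. No gaps; the bookkeeping you flag is handled exactly as the paper handles it.
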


So, when the weights of the hidden layer are fixed,
Lemma~\ref{lem:part} implies that the number
of SGD steps is at most polynomial in $n$.

 

\subsection{Stability}\label{proof}


When we run  SGD on the entire network,
the layers interact. For a $\ReLU$ network at time $t$, the update rule for $W$ is as follows. 
If the network classifies the input $x$
correctly with confidence more than $\beta$,
no change is made.
Otherwise, we change the weights in $M$ by $\Delta M= yv_xh$, where $y$ is the true label and $h$ is the step size. If also neuron $i$ of the hidden \emph{fired} on $x$, we update its incoming weights by $\Delta W_{i,:} = ym_ixh$. 
These update rules define the following dynamical system: 

\begin{align}  \label{dyn1}
W^{(t+1)} &=W^{(t)}+y \left( \lp M \tu \rp ^T \circ H\lp W\tu x+B\tu  \rp \right) x^T h  \\
B^{(t+1)}&= B^{(t)}+y\left( \lp M \tu \rp ^T \circ H\lp W\tu x+B\tu  \rp \right) h \\
M^{(t+1)}&=M^{(t)}+y\ReLU\lp W\tu x+B\tu\rp h \label{dyn3} \\
b^{(t+1)}&=b^{(t)}+yh  ,\label{dyn4}
\end{align}
 where $H$ is the Heaviside step function and $\circ$ is the Hadamard pointwise product. {
 
 A key observation in the proof is that the weights of the last layer (\eqref{dyn3} and \eqref{dyn4}) are updated exactly as the modified perceptron algorithm. 
Another key statement in the proof is that if the network has reached a good representation of the input (i.e., the {hidden layer} has a large margin), then the interaction between the layers during the continued training does not impair this representation.  
This is summarized in the following lemma
(we are not aware of a similar statement in the literature).   
}

\begin{lemma}\label{embed}
Let $M=0$, $b=0$, and 
$V=\{ \ReLU(Wx+B) : x\in X  \}$ be a linearly separable embedding of $X$ and with margin $\gamma >0$ by the hidden layer of a neural network of depth two 
with $\ReLU$ activation and weights given by $W,B$.
Let  $R_X=\max_{x\in X} \Vert x \Vert $, 
let $R=\max_{v \in V} \Vert v  \Vert $, and 
$0 < h \leq \frac{\gamma^{5/2} }{100 R^2R_X}$ be the integration step.
Assuming $R_X>1$ and $\gamma \leq 1$, and using $\beta=R^2h$ in the loss function, after $t$ SGD iterations the following hold:
\begin{itemize}
\item[--] Each $v\in V$ moves a distance of at most 
$O( R_X^2 h^2 Rt^{3/2} )$.
\item[--]The norm $\|M^{(t)}\|$
is at most $O(Rh\sqrt{t})$.
\item[--]
The training ends in at most $O(R^2/\gamma^2)$ SGD updates.
\end{itemize}

\end{lemma}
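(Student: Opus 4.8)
The plan is to exploit the structure already flagged in the text: equations~\eqref{dyn3} and~\eqref{dyn4} show that the output weights $(M,b)$ evolve \emph{exactly} as the modified perceptron (Algorithm~\ref{alg:Per}) run on the (moving) embedding, once we fold the bias into the weight vector by passing to the augmented vectors $\tilde M^{(t)}=(M^{(t)},b^{(t)})$ and $\tilde v_x^{(t)}=(v_x^{(t)},1)$, for which $M^{(t)}\cdot v_x^{(t)}+b^{(t)}=\tilde M^{(t)}\cdot\tilde v_x^{(t)}$ and the update becomes $\tilde M^{(t+1)}=\tilde M^{(t)}+y\,h\,\tilde v_x^{(t)}$ precisely when $y\,\tilde M^{(t)}\cdot\tilde v_x^{(t)}<\beta$. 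If the embedding $V$ were frozen, Theorem~\ref{R-perc} with margin $\gamma$, radius $O(R)$ and $\beta=R^2h$ would immediately yield all three conclusions. The real content is the coupling: equations~\eqref{dyn1} (and the analogous bias update) move $W,B$ and hence drift every $v_x$. The strategy is a bootstrap that controls the drift in terms of $\|M^{(t)}\|$, controls $\|M^{(t)}\|$ by the perceptron potential, and shows these two estimates are mutually consistent, so that the margin is never degraded below $\gamma/2$ over the relevant horizon.

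First I would record the two per-step estimates, each valid as long as the embeddings stay within radius $O(R)$. For the norm of $M$: the potential computation $\|\tilde M^{(t+1)}\|^2=\|\tilde M^{(t)}\|^2+2yh\,\tilde M^{(t)}\cdot\tilde v_x^{(t)}+h^2\|\tilde v_x^{(t)}\|^2\le\|\tilde M^{(t)}\|^2+2h\beta+O(R^2h^2)$ uses only the update condition (not the margin), so summing gives $\|M^{(t)}\|\le\|\tilde M^{(t)}\|=O(Rh\sqrt t)$, which is the second bullet. For the drift: since $\ReLU$ is $1$-Lipschitz, $\|v_{x'}^{(t+1)}-v_{x'}^{(t)}\|\le\|(W^{(t+1)}-W^{(t)})x'+(B^{(t+1)}-B^{(t)})\|$; the update of $W,B$ in~\eqref{dyn1} is a rank-one outer product whose $i$-th coordinate contributes $y\,m_i^{(t)}H_i\,h\,(x\cdot x'+1)$, so a coordinatewise bound together with Cauchy--Schwarz yields the clean estimate $\|v_{x'}^{(t+1)}-v_{x'}^{(t)}\|\le(R_X^2+1)\,h\,\|M^{(t)}\|\le 2R_X^2 h\,\|M^{(t)}\|$. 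Inserting $\|M^{(s)}\|=O(Rh\sqrt s)$ and using $\sum_{s<t}\sqrt s=O(t^{3/2})$ gives a total drift of $O(R_X^2 R\,h^2\,t^{3/2})$, which is the first bullet.

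Next I would close the loop with a stopping-time argument. Let $\tau$ be the first update step at which some $v_x$ has drifted by more than $\gamma/2$ from its initial position. For every $t<\tau$ the fixed unit direction $w^*$ realizing the initial margin still satisfies $y\,w^*\cdot\tilde v_x^{(t)}\ge\gamma-\|v_x^{(t)}-v_x^{(0)}\|\ge\gamma/2$ (the appended coordinate $1$ never drifts), so the perceptron lower bound applies: each update increases $w^*\cdot\tilde M^{(t)}$ by at least $h\gamma/2$, while $w^*\cdot\tilde M^{(t)}\le\|\tilde M^{(t)}\|=O(Rh\sqrt t)$, forcing the number of updates before $\tau$ to be at most $O(R^2/\gamma^2)$. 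Substituting this horizon into the drift bound gives a total drift of at most $O\big(R_X^2 R^4 h^2/\gamma^3\big)$, and the hypothesis $h\le\gamma^{5/2}/(100R^2R_X)$ makes this $O(\gamma^2)\le\gamma/2$ (using $\gamma\le 1$) --- strictly below the threshold defining $\tau$. Hence $\tau$ is never reached within the horizon, the estimates hold unconditionally, and the algorithm terminates in $O(R^2/\gamma^2)$ updates, giving the third bullet and validating the first two (after $\tau$ no update occurs, so the drift freezes and the bounds persist).

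The main obstacle is exactly this circular dependence: bounding the drift of $V$ needs a bound on $\|M^{(t)}\|$, while the perceptron termination --- and the very survival of the margin --- needs the drift to stay small. The delicate point is the calibration of the exponents: the perceptron horizon is $\Theta(R^2/\gamma^2)$ updates and the drift accumulates like $t^{3/2}$, so the total drift scales like $h^2/\gamma^3$ times $R$-factors, and it is precisely the exponent $5/2$ in the assumed bound on $h$ that forces this to be a lower-order term ($O(\gamma^2)$) rather than comparable to $\gamma$, which is what lets the bootstrap close. The remaining work is bookkeeping: folding in the bias by augmentation and checking that the moved embeddings stay within $O(R)$ (which holds since $\gamma\le R$, so the radius is at most $R+\gamma/2\le \tfrac32 R$ throughout), so that the radius used in the perceptron potential is genuinely $O(R)$.
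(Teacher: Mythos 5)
Your proposal is correct and follows essentially the same route as the paper's proof: control $\|M^{(t)}\|$ via the perceptron potential, bound the per-step drift of the embedding by $O(R_X^2 h\|M^{(t)}\|)$, accumulate to $O(R_X^2 R h^2 t^{3/2})$, and close the bootstrap over the horizon $O(R^2/\gamma^2)$ using the exponent $5/2$ in the bound on $h$. The only cosmetic difference is that you phrase the self-consistency as a stopping-time argument, whereas the paper assumes $t\leq 20R^2/\gamma^2$ and a worst-case monotone growth of the radius $R^{(t)}$ up front and verifies the assumption at the end.
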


Intuitively, this type of lemma can be useful
in many other contexts. The high level idea
is to identify a ``good geometric structure''
that the network reaches and enables efficient learning. 

\begin{proof}
We are interested in the maximal distance 
the embedding of an element $x\in X$ has moved from its initial embedding:
\begin{align} 
\lv v_x^{(t)}-v_x^{(0)}\rv&=\left\Vert \ReLU(W^{(t)}x+B^{(t)})-\ReLU(W^{(0)}x+B^{(0)}) \right\Vert  \\
& \leq \lv  W \tu -W^{(0)} \rv R_X + \lv  B \tu -B^{(0)} \rv \\
& \leq  \sum_{k=1}^t \left[ R_X\lv W^{(k)} -W^{(k-1)} \rv+\lv B^{(k)} -B^{(k-1)} \rv \right] .\label{dist}   
\end{align}
To simplify equations \eqref{dyn1}-\eqref{dyn4} discussed above, we assume that during the optimization process the  norm of the weights $W$ and $B$ grow at a maximal rate:
\begin{align} \label{dyn1_s}
\lv W^{(t+1)} - W^{(t)} \rv  &=\lv y \left( \lp M \tu \rp ^T \circ H\lp W\tu x+B\tu \rp \right) x^T h  \rv  
\leq  \lv M \tu \rv R_X h ,   \\
\lv B^{(t+1)}- B^{(t)}  \rv &=\lv y  \lp M \tu \rp ^T \circ H\lp W\tu x+B\tu \rp  h \rv \leq    \lv M \tu \rv h \label{dyn2_s} ;
\end{align}
\an{here the norm of a matrix is the $\ell_2$-norm.}

To bound these quantities, we follow the  modified perceptron proof and add another quantity to bound. That is, the maximal norm $R\tu$ of the embedded space $X$ at time $t$ satisfies (by assumption $R_X>1$)
$$R^{(t+1)}\leq R\tu +(1+ R_X^2 )\lv M \tu \rv h \leq  R\tu +2 R_X^2 \lv M \tu \rv h ;$$
\an{we used that the spectral norm of a matrix is at most
its $\ell_2$-norm.}

We assume a worst-case where $R\tu$ grows monotonically at a maximal rate.  By the modified perceptron algorithm and choice $\beta=R^2h$,  
$$\lv M^{(t)} \rv\leq \sqrt{t((R\tu h)^2+2\beta h )}\leq \sqrt{3} R\tu h\sqrt{t}.$$ 
By choice of $h\leq  \frac{\gamma^{5/2} }{100R^2R_X} $  and assuming $t \leq 20R^2/\gamma^2$,
$$R^{(t+1)} \leq R\tu +2\sqrt{3}  R_X^2 R\tu  h^2 \sqrt{t}  \leq R\tu +\frac{2\sqrt{60}}{100^2} R\tu \gamma^4/R^3 .$$
Solving the above recursive equation,
it holds for all $t\leq 20R^2/\gamma^2$,
$$R\tu\leq \lp1+\frac{2\sqrt{60}}{100^2}\gamma^4/R^3\rp^tR     \leq \exp\lp \frac{40\sqrt{60}}{100^2}\gamma^2/R \rp R
\leq 2R.$$
Now, summing equation \ref{dist}, we have 
$$\lv v_x^{(t)}-v_x^{(0)}\rv \leq 2\sqrt{6} R_X^2 h^2 Rt^{3/2} ,$$ since $\sum_{k=1}^t \sqrt k 
\leq t^{3/2}$.  

So in $20R^2/\gamma^2$ updates, the elements embedded by the network travelled at most $\frac{2\cdot20^{3/2}\sqrt{6}}{100^2}\gamma^2\leq 0.05\gamma^2$. 
Hence, the samples the network received kept a margin of $0.9\gamma$ during training (by the assumption $\gamma\leq 1$). 
By choice of the loss function,
SGD changes the output neuron as in the modified 
perceptron algorithm.
By Theorem~\ref{R-perc},
the number of updates is at most $\frac{2R^2+(2R)^2}{0.9\gamma^2}< 20R^2/\gamma^2 $. So, the assumption on $t$ we made during the proof holds.


%
%

%
%
%

\end{proof}

\subsection{Main Result}

\begin{proof}[Proof of Theorem \ref{main}]
There is an unknown distribution $\cD$ over the space $X$. We pick i.i.d.\ examples $S=((x_1,y_1),...,(x_m,y_m))$ where $m \geq c \big( \tfrac{n+\log (1/\delta)}{\epsilon} \big)$ according to $\cD$,
where $y_i = f(x_i)$ for some $f \in \CS$.
Run SGD for $O(n^5)$ steps,
where the step size is $h =O (1/n^6)$
and the parameter of the loss function is $\beta = R^2h$
with $R=n^{3/2}$.

We claim that it suffices to show that at the end of the training 
(i) the network correctly classifies
all the sample points $x_1,\ldots,x_m$, and 
(ii) for every $x \in X$ such that there exists $1\leq i\leq m$ with $|x|=|x_i|$, the network outputs $y_i$ on $x$ as well.
Here is why.
The initialization of the network embeds the space $X$ into $4n+3$  dimensional space (including the bias neuron of the hidden layer). 
Let $V^{(0)}$ be the initial embedding 
$V^{(0)}=\{\ReLU(W^{(0)} x+B^{(0)}): x \in X\}$.
Although $|X|=2^n$, 
the size of $V^{(0)}$ is $n+1$.
The VC dimension of all the boolean functions over 
$V^{(0)}$ is $n+1$. 
Now, $m$ samples suffice
to yield $\eps$ true error for an ERM 
when the VC dimension is $n+1$;
see e.g.~Theorem~6.7 in~\cite{understanding}.
It remains to prove (i) and (ii) above.

By Lemma~\ref{lem:part}, at the beginning of the training, the partition
of $V^{(0)}$
defined by the target $f \in \CS$ has a margin of $\gamma=\Omega(1/n)$.
We are interested in the eventual 
$V^* = \{\ReLU(W^{*} x+B^{*}): x \in X\}$ embedding of $X$ as well.
The modified perceptron algorithm guarantees that after 
$K\leq(2\beta h +(Rh)^2)/(\gamma h)^2 = O(n^5)$ updates, ($M^{*},b^{*}$) separates the embedded sample
$V^{*}_S = \{\ReLU(W^{*}x_i + B^{*}) : 1 \leq i \leq m\}$ with a margin of at least $\gamma/3$. This happens as long as the updates we perform come from a set with maximal norm $R$ and 
with margin at least $\gamma$.
This is guaranteed by Lemma \ref{embed} and concludes the proof of (i).

It remains to prove (ii).
Lemma~\ref{embed} states that as long as less than $K=O(n^5)$ updates were made, the elements in $V$ moved at most $O(1/n^2)$. 
At the end of the training, the embedded sample $V_S$ is separated with a margin of at least $\gamma/3$ 
with respect to the hyperplane defined by $M^{*}$ and $B^{*}$.
Each $v^*_x$ for $x \in X$
moved at most $O(1/n^2) < \gamma/4$.
This means that if $|x| = |x_i|$
then the network has the same output on $x$ and $x_i$.
Since the network has zero empirical error,
the output on this $x$ is $y_i$ as well.

A similar proof is available with $\sig$ activation
(with better convergence rate and larger allowed step size).

\end{proof} 

\begin{remark}
The generalization part of the above proof
can be viewed as a consequence of
sample compression~(\cite{little}). 
Although the eventual network 
 depends on {\em all} examples,
the proof shows that its functionality 
depends on at most $n+1$ examples.
Indeed, after the training, all examples with equal hamming weight
have the same label. 
\end{remark}

\begin{remark}
The parameter $\beta=R^2h$ we chose in the proof may seem odd and negligible. It is a construct in the proof that allows us to bound efficiently the distance that the elements  in $V$ have moved during the training. For all practical purposes $\beta=0$ works as well (see Figure \ref{beta_0}).
\end{remark}

%

\section{Experiments}\label{exp}
We accompany the theoretical results with some experiments.
We used a network with one hidden layer of $4n+3$ neurons, $\ReLU$ activation, and the hinge loss with $\beta=n^3 h$.  In all the experiments, we used SGD with mini-batch of size one
and before each epoch we randomized the sample. The graphs present the training error and the true error\footnote{We deal with high dimensional spaces, so the true error was not calculated exactly but approximated on an independent batch of samples of size $10^4$. }
versus the epoch of the training process. 
In all the comparisons below, we chose a random symmetric function  and a random sample from $X$.

%
%
%
%
%
%
%
%
%
%
%
%

{
\begin{figure}[h]
 
\begin{subfigure}{0.5\textwidth}
\includegraphics[width=0.9\linewidth]{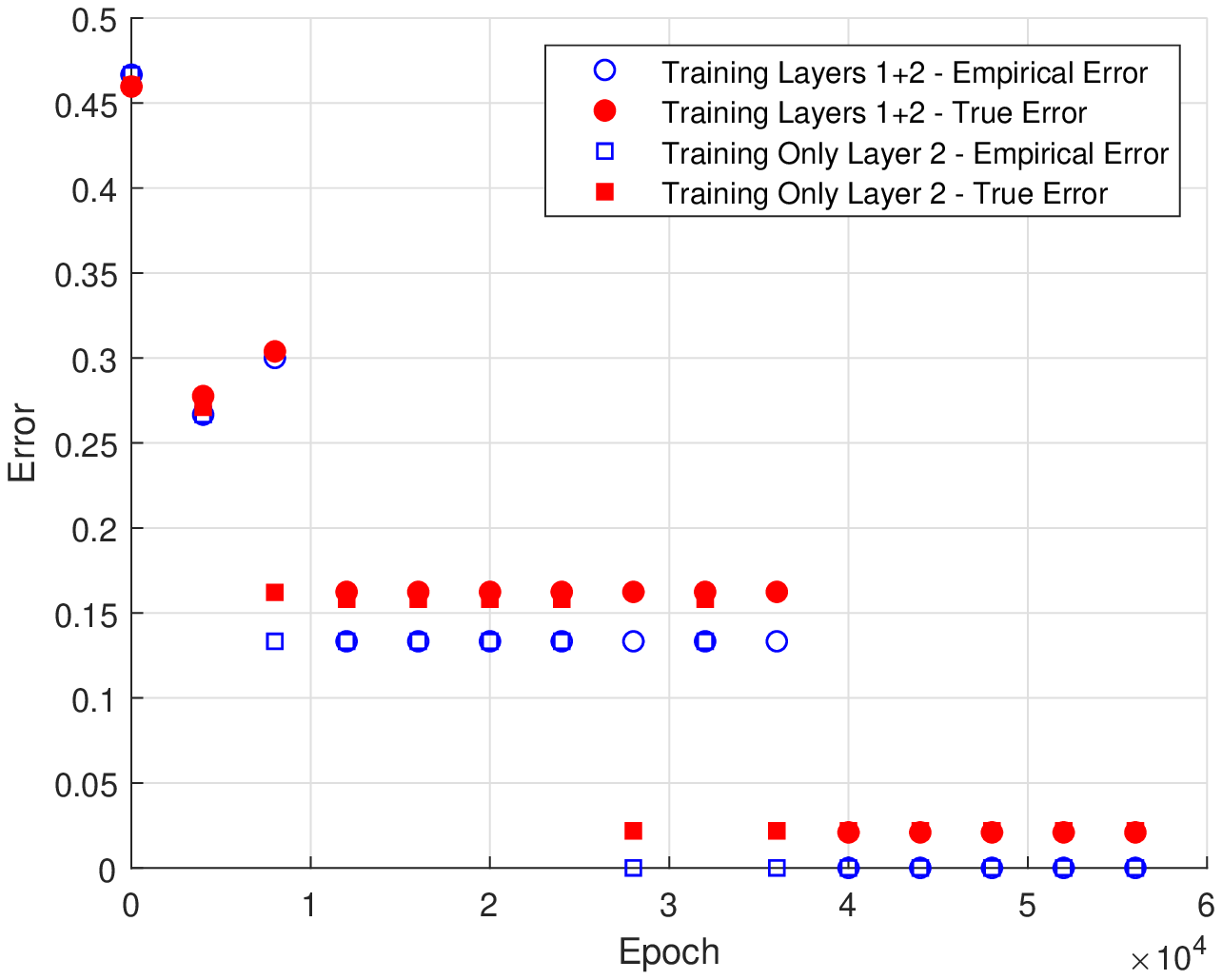} 
\caption{$n=30$ input dimension}
\label{fig:subim1}
\end{subfigure}\hfill
\begin{subfigure}{0.5\textwidth}
\includegraphics[width=0.9\linewidth]{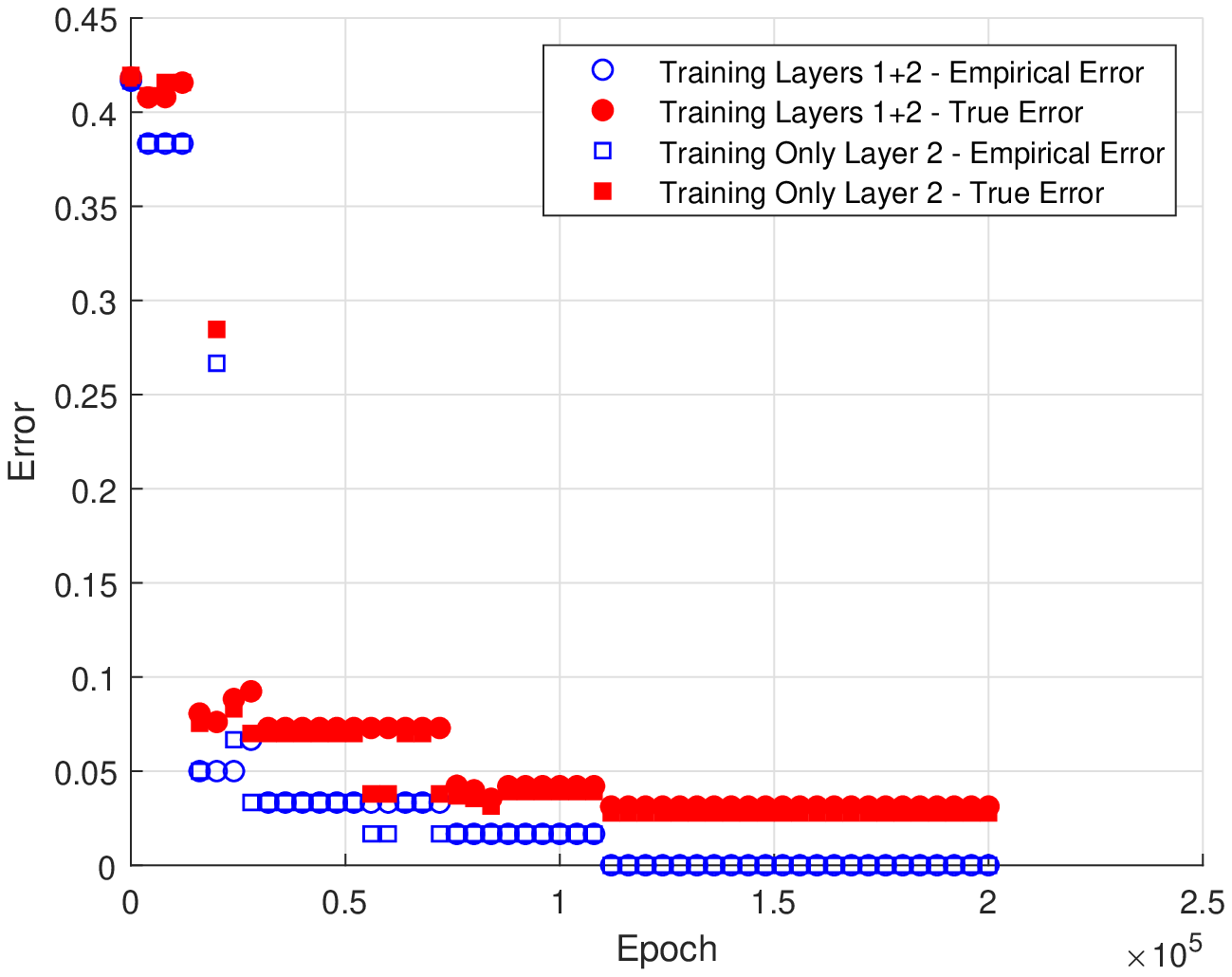}
\caption{$n=60$ input dimension}
\label{fig:subim2}
\end{subfigure}
 
\caption{Error during training for a sample of size $n$.}
\label{fig1}
\end{figure}

}

\begin{figure}[h]
 
\begin{subfigure}{0.32\textwidth}
\includegraphics[width=0.9\linewidth]{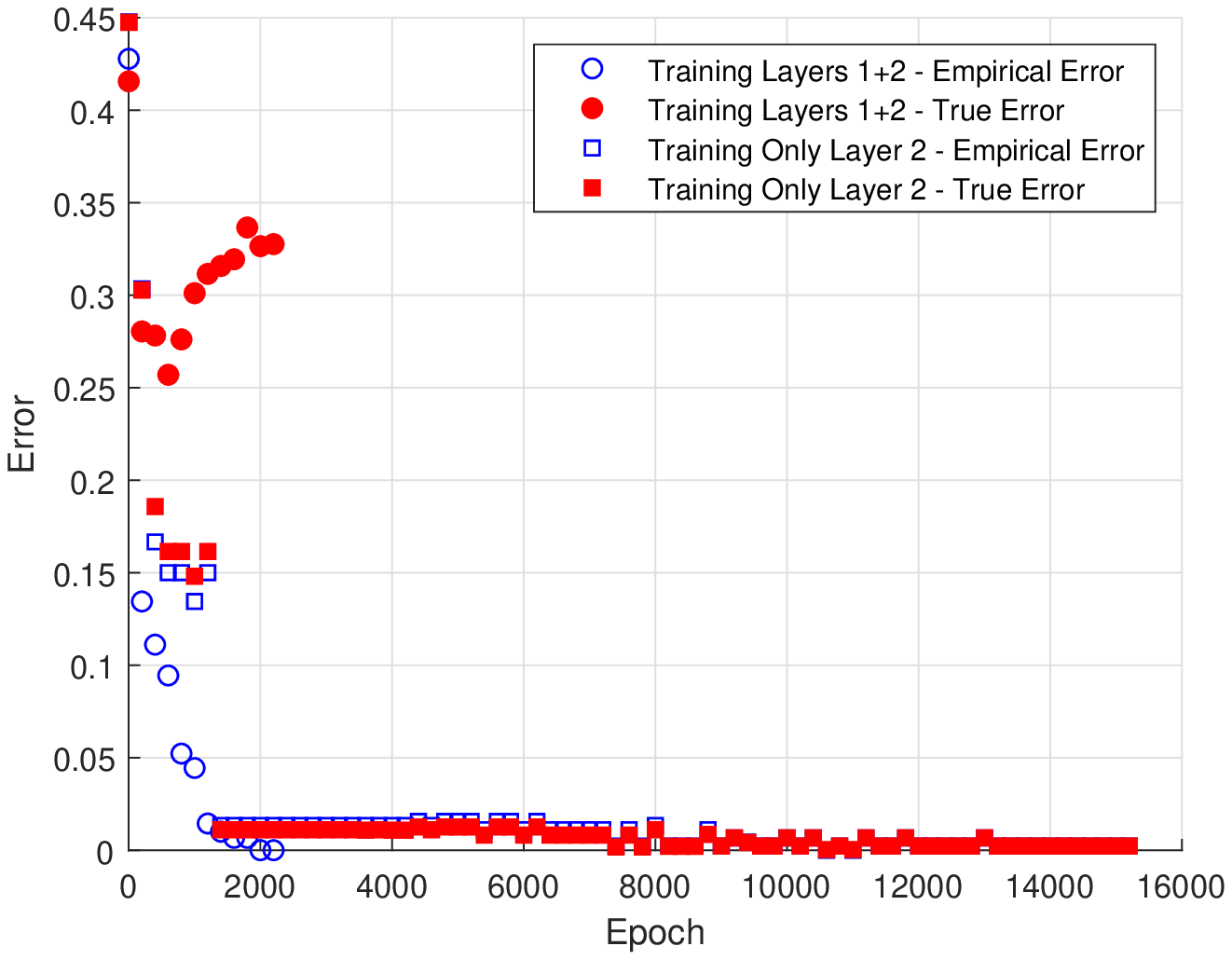} 
\caption{step size $n^{-2}$}

\end{subfigure}
\begin{subfigure}{0.32\textwidth}
\includegraphics[width=0.9\linewidth]{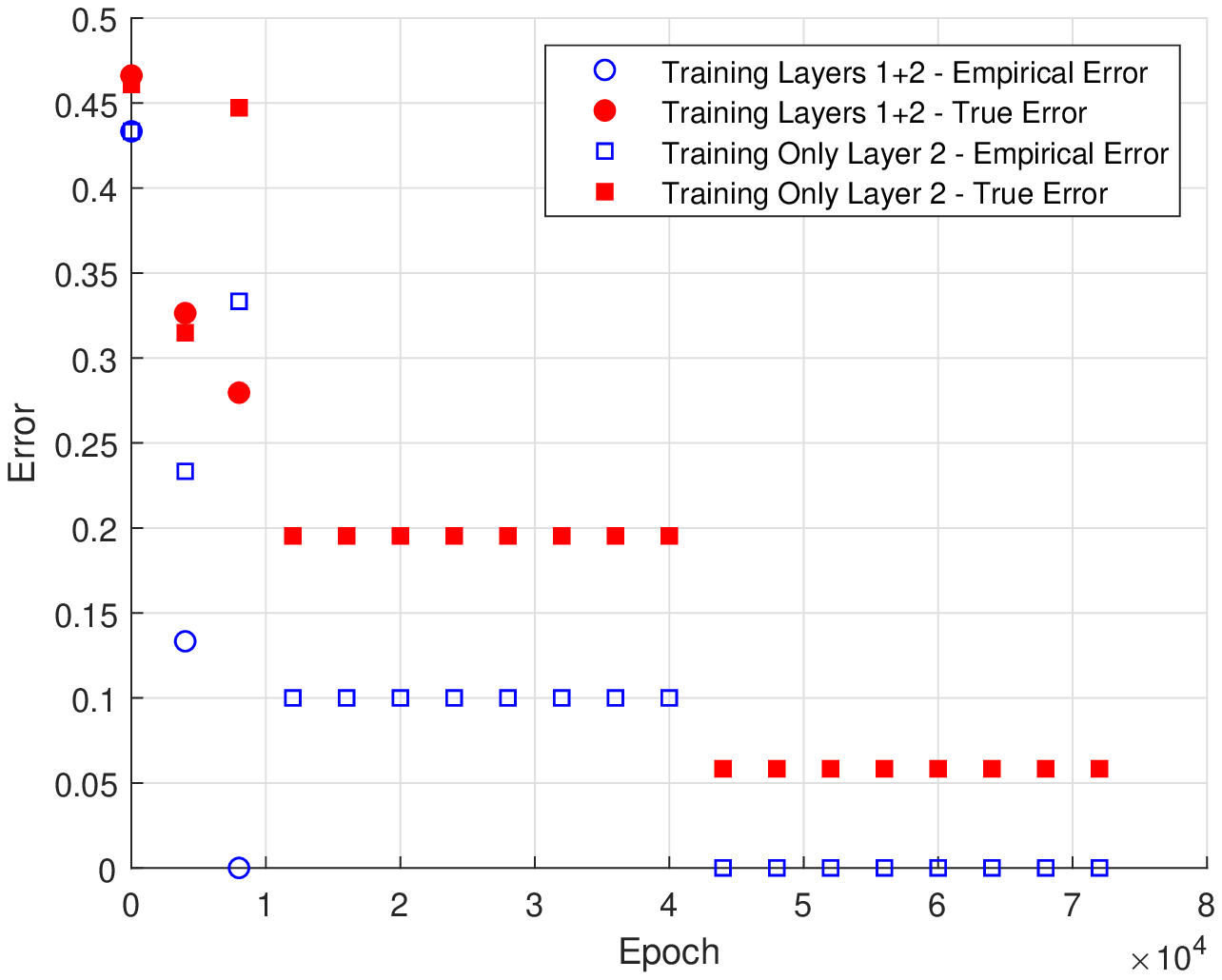}
\caption{step size $n^{-3}$}

\end{subfigure}
\begin{subfigure}{0.32\textwidth}
\includegraphics[width=0.9\linewidth]{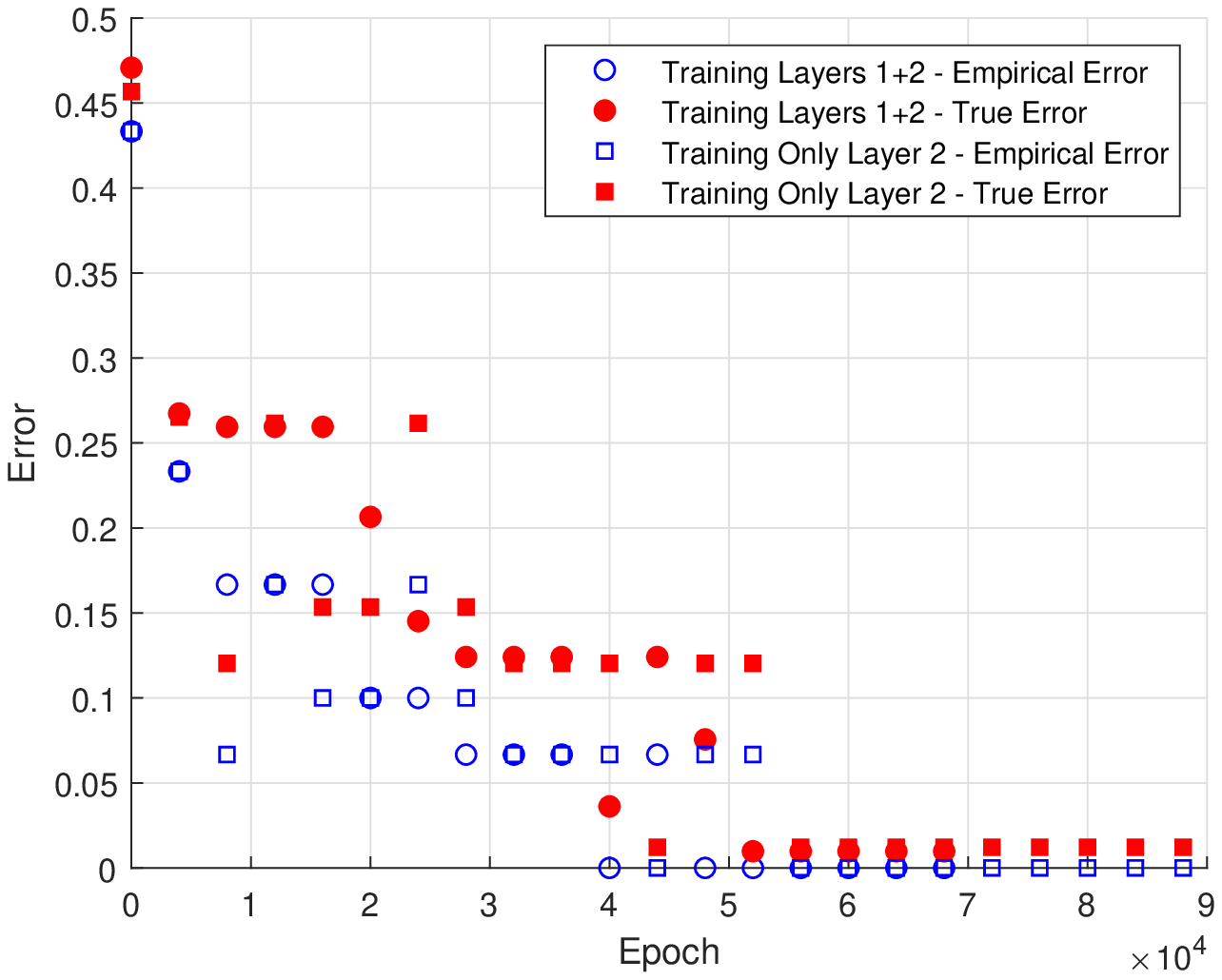}
\caption{step size $n^{-4}$}

\end{subfigure}\label{fig2}

\caption{Error during training for an input dimension and a sample of size $n=30$. }
\label{fig2}
\end{figure}

\begin{figure}[h]
\vskip 0.1in
\begin{center} 
\centerline{\includegraphics[ scale=0.40]{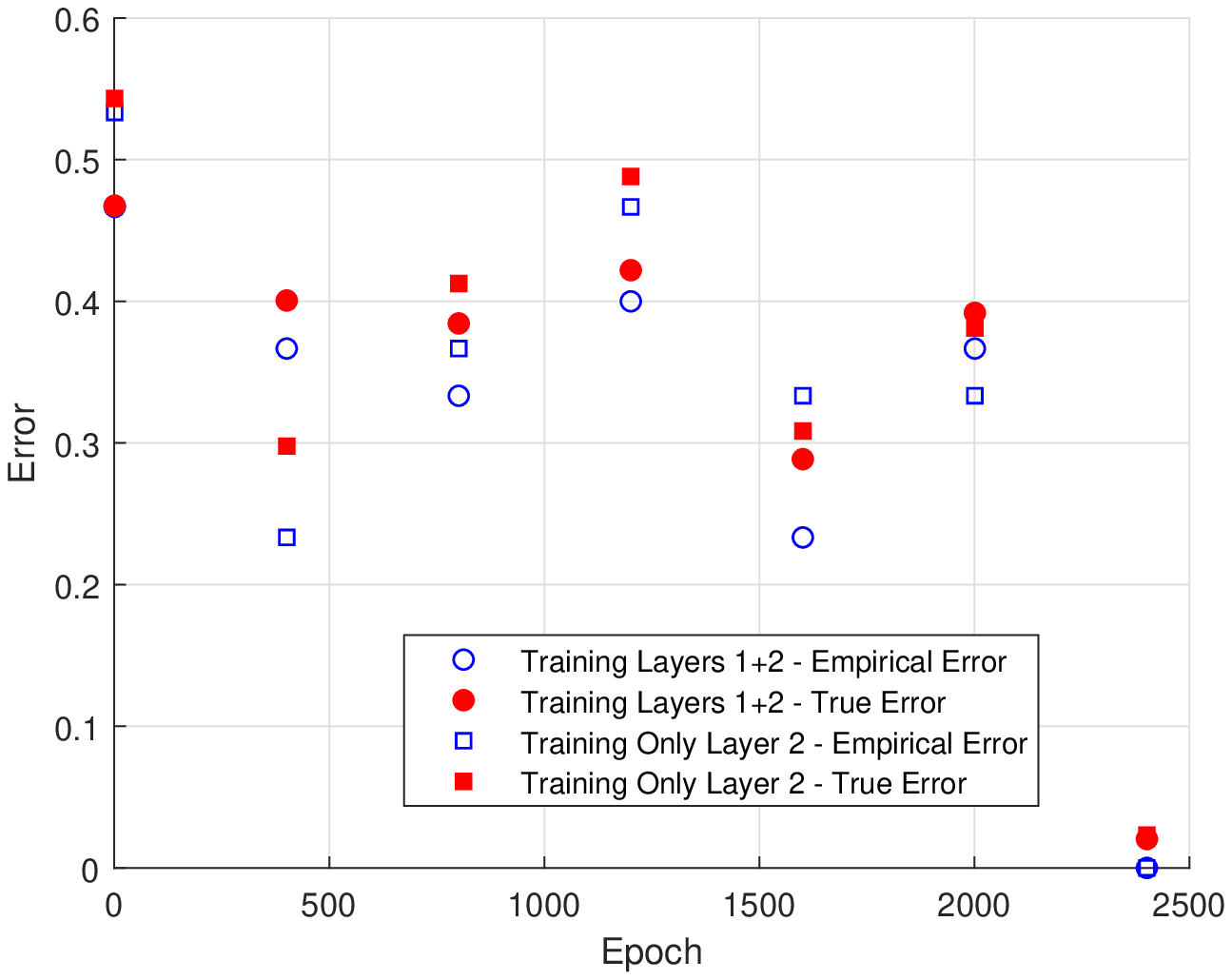}}
\caption{$\beta=0$: error during training for an input dimension and a sample of size $n=30$.
}
\label{beta_0}
\end{center}
\vskip -0.1in
\end{figure}

\begin{figure}[h]

\begin{subfigure}{0.45\textwidth}
\includegraphics[width=0.9\linewidth]{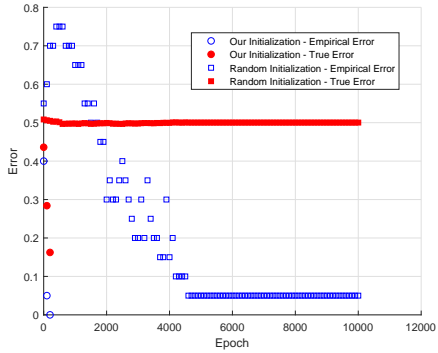} 
\caption{sample of size $n$}
\label{fig:subim1}
\end{subfigure}\hfill
\begin{subfigure}{0.45\textwidth}
\includegraphics[width=0.9\linewidth]{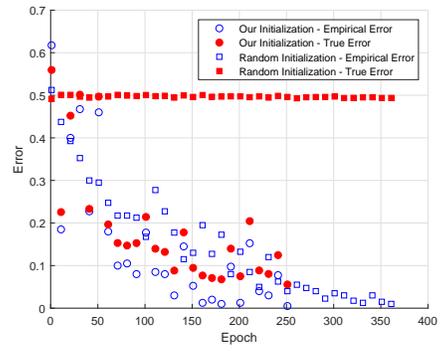}
\caption{sample of size $n^2$}
\label{fig:subim2}
\end{subfigure}

\begin{subfigure}{0.45\textwidth}
\includegraphics[width=0.9\linewidth]{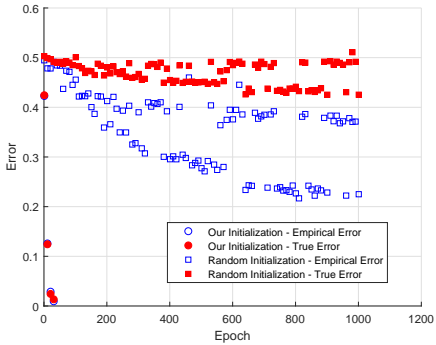} 
\caption{sample of size $n^3$}
\label{fig:subim1}
\end{subfigure}
\hfill
\begin{subfigure}{0.45\textwidth}
\includegraphics[width=0.9\linewidth]{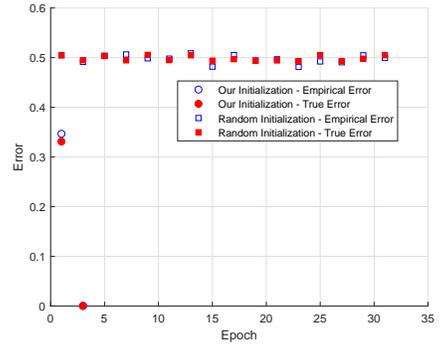}
\caption{sample of size $n^4$}
\label{fig:subim2}
\end{subfigure}
 
\caption{Parity: error during training for input dimension $n=20$. }
\label{fig_par}
\end{figure}

\begin{figure}[h]
 
\begin{subfigure}{0.45\textwidth}
\includegraphics[width=0.9\linewidth]{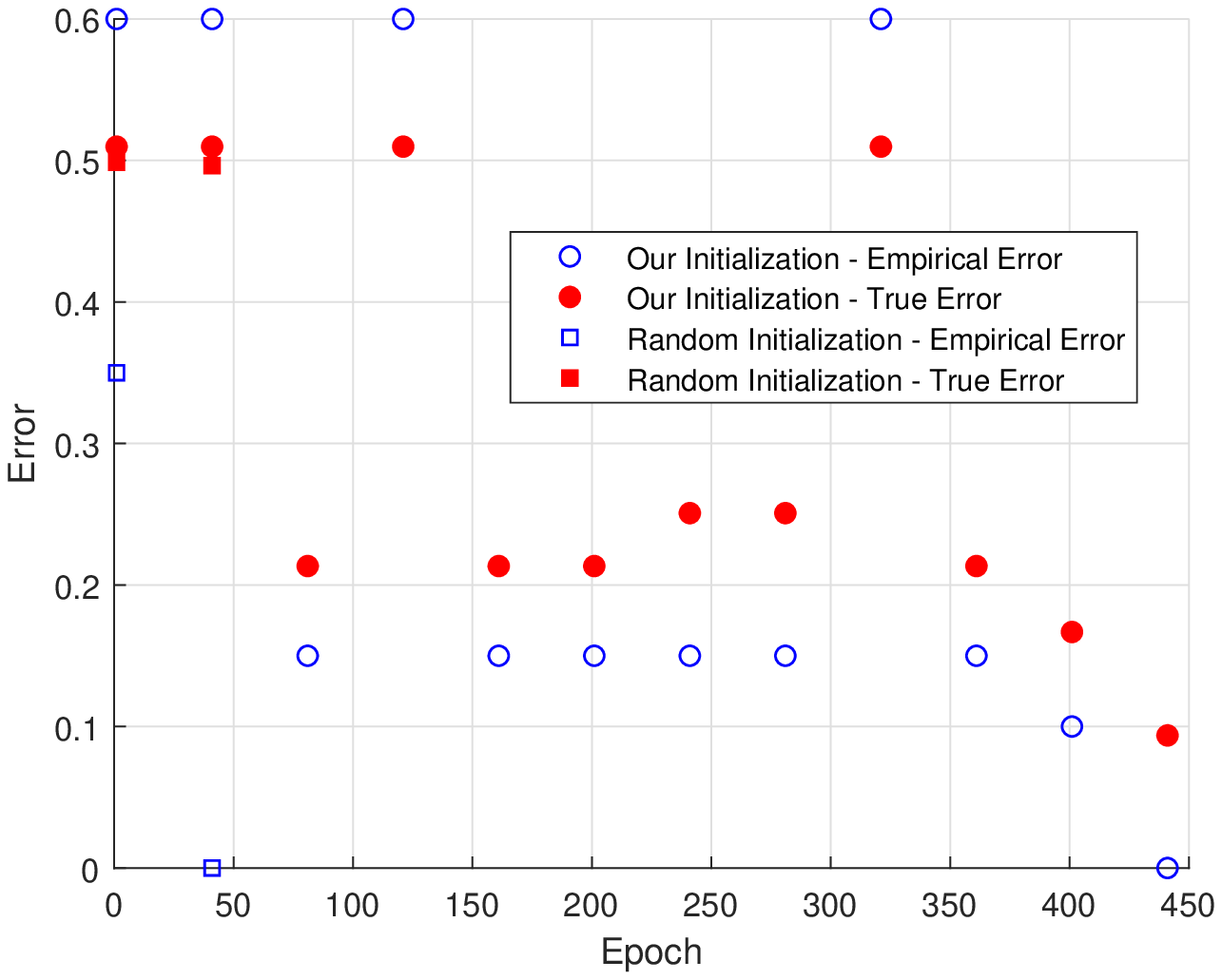} 
\caption{sample of size $n$}
\label{fig:subim1}
\end{subfigure}\hfill
\begin{subfigure}{0.45\textwidth}
\includegraphics[width=0.9\linewidth]{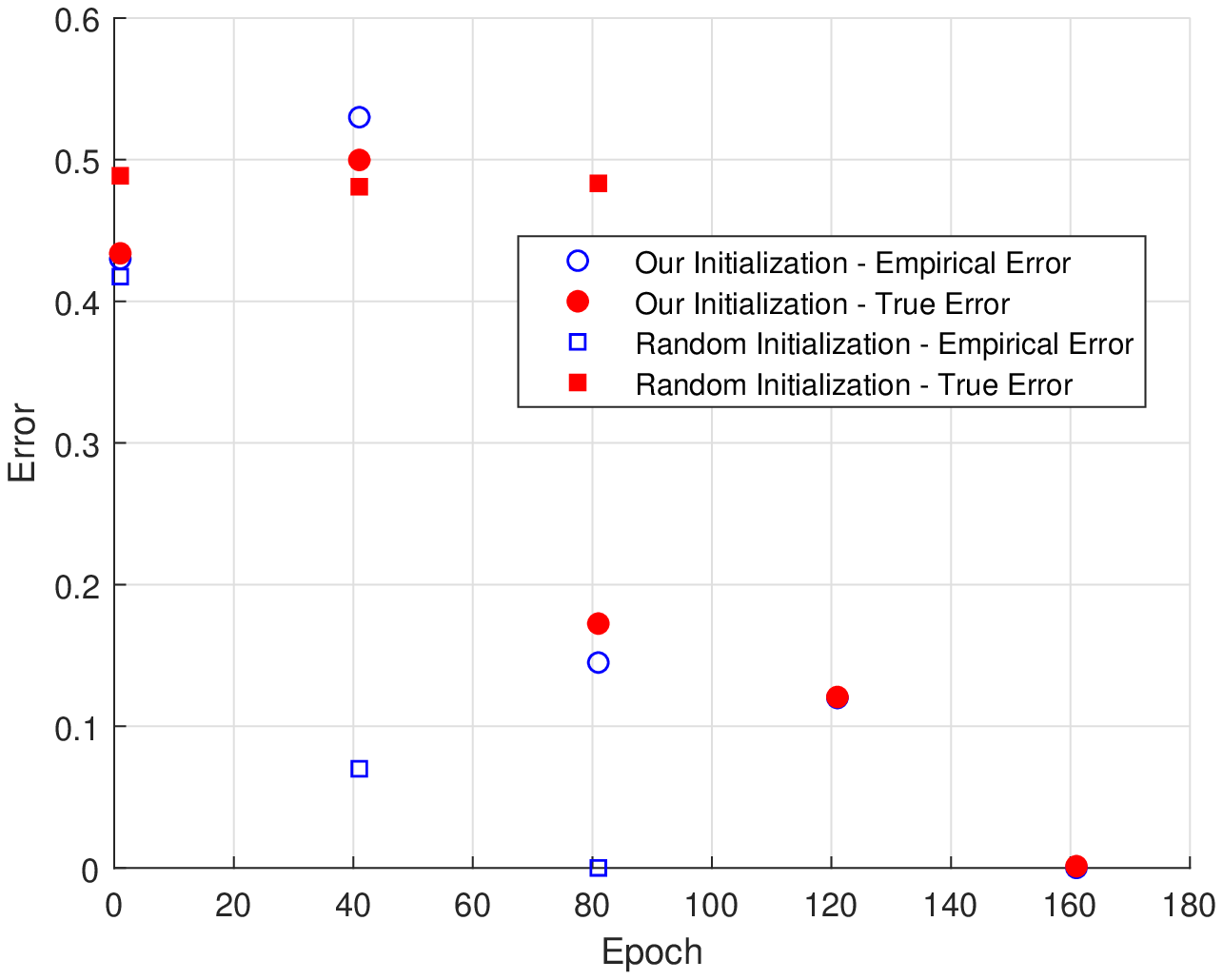}
\caption{sample of size $n^2$}
\label{fig:subim2}
\end{subfigure}

\begin{subfigure}{0.45\textwidth}
\includegraphics[width=0.9\linewidth]{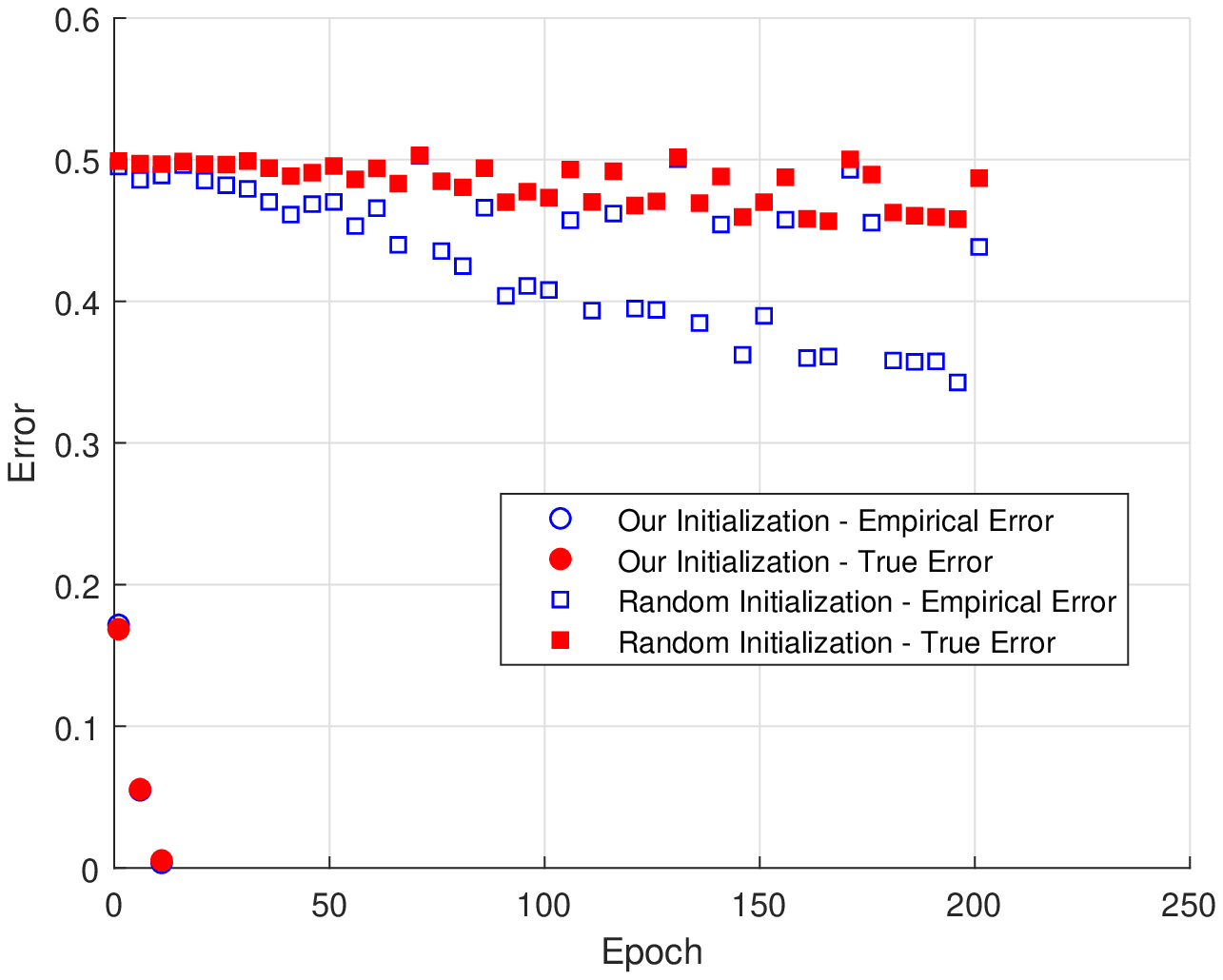} 
\caption{sample of size $n^3$}
\label{fig:subim1}
\end{subfigure}\hfill
\begin{subfigure}{0.45\textwidth}
\includegraphics[width=0.9\linewidth]{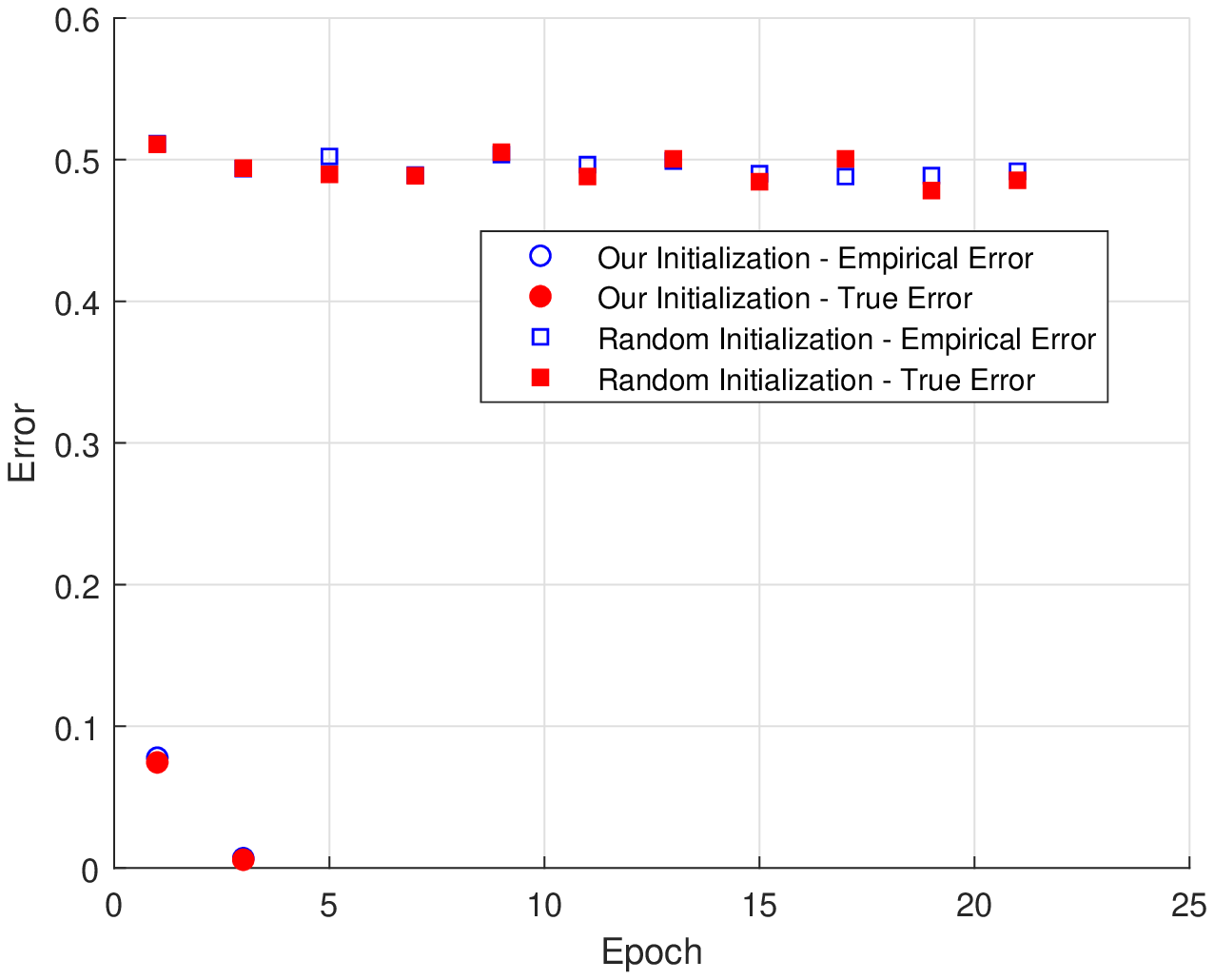}
\caption{sample of size $n^4$}
\label{fig:subim2}
\end{subfigure}
 
\caption{Random symmetric function: error during training for input dimension $n=20$. }
\label{fig_sym}
\end{figure}

\subsection{The Theory in Practice} 
Figure \ref{fig1} demonstrates our theoretical results and also validates  the performance of our initialization. In one setting, we trained only the second layer (freezed the weights of the hidden layer) which essentially  corresponds to the perceptron algorithm. In the second setting, we trained both layers with a step size $h=n^{-6}$ (as the theory suggests). As expected, performance in both cases is similar. We remark that SGD continues to run even after minimizing the empirical error. This happens because of the parameter $\beta>0$.

\subsection{Overstepping the Theory} \label{over}
Here we experiment with two parameters in the proof,
the step size $h$ and the confidence parameter $\beta$.

In Figure \ref{fig2}, we 
used three different step sizes,
two of which much larger than the theory suggests.
We see that the training error converges much faster to zero,
when the step size is larger.
This fast convergence comes at the expense of the true error.
For a large step size, generalization cease
to hold.


Setting $\beta=n^3h$ is a construct in the proof. 
Figure \ref{beta_0} shows that setting $\beta=0$ does not impair the performance.
The difference between theory (requires $\beta >0$)
and practice (allows $\beta=0$) can be explained as follows.
The proof bounds the worst-case
movement of the hidden layer,
whereas in practice an average-case argument suffices.



\subsection{Hard to Learn Parity}  \label{par_hard}  
Figure \ref{fig_par} shows that even for $n=20$, learning parity is hard from a random initialization. When the sample size is small the training error can be nullified but the true error is large. As the sample grows, it becomes much harder for the network to nullify even the training error. With our initialization, both the training error and true error are minimized quickly. Figure \ref{fig_sym} demonstrates the same phenomenon for a random symmetric function.

\subsection{Corruption of Data} 
{

Our initialization also delivers satisfying results when the input data it corrupted. In figure \ref{p_noise}, we randomly perturb (with probability $p = \tfrac{1}{10}$) the labels and use the same SGD
to train the model.
In figure \ref{eps_noise}, we randomly shift every entry of the vectors in the space $X$ by $\epsilon$ that is uniformly distributed in 
$[-0.1,0.1]^n$.

 \begin{figure}[h]
\vskip 0.1in
\begin{center} 
\centerline{\includegraphics[ scale=0.50]{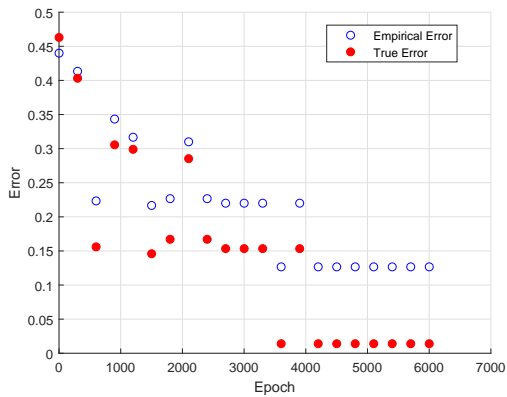}}
\caption{Label error resistance. Labels of the sample were flipped with probability $p=\tfrac{1}{10}$. Sample of size $10n$ whose input dimension is $n=30$.
}
\label{p_noise}
\end{center}
\vskip -0.1in
\end{figure} 
 \begin{figure}[h]
\vskip 0.1in
\begin{center} 
\centerline{\includegraphics[ scale=0.50]{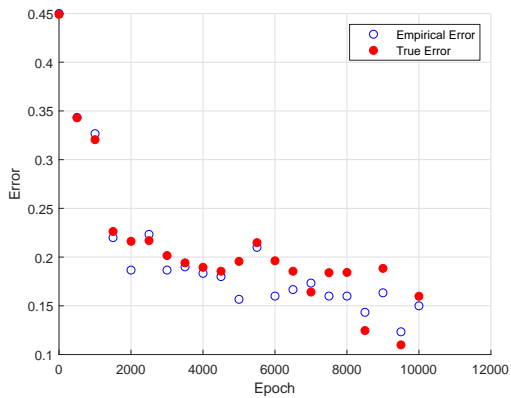}}
\caption{Input Error resistance. All the entries of the vectors in the space were randomly shifted. 
Sample of size $10n$ whose input dimension is $n=30$.
}
\label{eps_noise}
\end{center}
\vskip -0.1in
\end{figure} 

\section{Conclusion}

This work demonstrates that symmetries can play a critical role when designing a neural network. We proved that any symmetric function can be learned by a shallow neural network,
with proper initialization. We demonstrated by simulations 
that this neural network is stable under corruption of data,
and that the small step size is the proof is necessary.

We also demonstrated 
that the parity function or a random symmetric function cannot be learned with random initialization.
How to explain this empirical phenomenon is still an open question. The works \cite{Shamir} and \cite{SongVW} treated parities  
using the language of SQ.
This language obscures the inner mechanism of the network training,
so a more concrete explanation is currently missing.

We proved in a special case
that the standard SGD training of a network 
efficiently produces low true error.
The general problem that remains is
proving similar results for general neural networks. 
A suggestion for future works is to try to identify
favorable geometric states of the network
that guarantee fast convergence and generalization.

\subsection*{Acknowledgements}

We wish to thank Adam Klivans for helpful comments.

\small

\begin{thebibliography}{43}
\providecommand{\natexlab}[1]{#1}
\providecommand{\url}[1]{\texttt{#1}}
\expandafter\ifx\csname urlstyle\endcsname\relax
  \providecommand{\doi}[1]{doi: #1}\else
  \providecommand{\doi}{doi: \begingroup \urlstyle{rm}\Url}\fi

\bibitem[Abbe and Sandon(2018)]{deep_limit}
Emmanuel Abbe and Colin Sandon.
\newblock Provable limitations of deep learning, 2018.

\bibitem[Ajtai(1983)]{Par2}
M.~Ajtai.
\newblock $\sum$11-formulae on finite structures.
\newblock \emph{Annals of Pure and Applied Logic}, 24\penalty0 (1), 
pages  1--48, 1983.

\bibitem[Allen{-}Zhu et~al.(2018{\natexlab{a}})]{allen_g}
Zeyuan Allen{-}Zhu, Yuanzhi Li, and Yingyu Liang.
\newblock Learning and generalization in overparameterized neural networks,
  going beyond two layers.
\newblock \emph{CoRR}, abs/1811.04918, 2018{\natexlab{a}}.

\bibitem[Allen{-}Zhu et~al.(2018{\natexlab{b}})]{allen}
Zeyuan Allen{-}Zhu, Yuanzhi Li, and Zhao Song.
\newblock A convergence theory for deep learning via over-parameterization.
\newblock \emph{CoRR}, abs/1811.03962, 2018{\natexlab{b}}.

\bibitem[Andoni et~al.(2014)]{andoni}
Alexandr Andoni, Rina Panigrahy, Gregory Valiant, and Li~Zhang.
\newblock Learning polynomials with neural networks.
\newblock In Eric~P. Xing and Tony Jebara, editors, \emph{Proceedings of the
  31st International Conference on Machine Learning}, volume~32 of
  \emph{Proceedings of Machine Learning Research}, pages 1908--1916, 2014.

\bibitem[Arora et~al.(2016)]{ABMM}
Raman Arora, Amitabh Basu, Poorya Mianjy, and Anirbit Mukherjee.
\newblock Understanding deep neural networks with rectified linear units.
\newblock \emph{CoRR}, abs/1611.01491, 2016.

\bibitem[Arora et~al.(2019)]{arora}
Sanjeev Arora, Simon~S. Du, Wei Hu, Zhiyuan Li, and Ruosong Wang.
\newblock Fine-grained analysis of optimization and generalization for
  overparameterized two-layer neural networks.
\newblock \emph{CoRR}, abs/1901.08584, 2019.

\bibitem[Arslanov et~al.(2016)]{parity_arch2}
Marat Arslanov, Zhazira E.~Amirgalieva, and Chingiz A.~Kenshimov.
\newblock N-bit parity neural networks with minimum number of threshold
  neurons.
\newblock \emph{Open Engineering}, 6, 01 2016.

\bibitem[Bartlett et~al.(2017)]{marg_nn_t3}
Peter Bartlett, Dylan~J. Foster, and Matus Telgarsky.
\newblock Spectrally-normalized margin bounds for neural networks, 2017.

\bibitem[Brutzkus et~al.(2017)]{sgd_lsd}
Alon Brutzkus, Amir Globerson, Eran Malach, and Shai Shalev-Shwartz.
\newblock SGD learns over-parameterized networks that provably generalize on linearly separable data.
\newblock In {\em ICLR}, 2018.

\bibitem[Chizat and Bach(2018)]{chizat}
Lenaic Chizat and Francis Bach.
\newblock A note on lazy training in supervised differentiable programming, 12, 2018.

\bibitem[Cohen and Welling(2016)]{deep_equi}
Taco~S. Cohen and Max Welling.
\newblock Group equivariant convolutional networks, 2016.

\bibitem[Collobert and Bengio(2004)]{perc_mod1}
Ronan Collobert and Samy Bengio.
\newblock Links between perceptrons, mlps and svms.
\newblock In \emph{Proceedings of the Twenty-first International Conference on
  Machine Learning}, ICML '04, page 23, 2004.
%

\bibitem[Daniely(2017)]{daniely}
Amit Daniely.
\newblock Sgd learns the conjugate kernel class of the network.
\newblock 
In 
\emph{Advances in Neural Information Processing Systems 30}, pages 2422--2430, 2017.

\bibitem[Du et~al.(2018)]{du}
Simon~S. Du, Xiyu Zhai, Barnab{\'{a}}s P{\'{o}}czos, and Aarti Singh.
\newblock Gradient descent provably optimizes over-parameterized neural
  networks.
\newblock \emph{CoRR}, abs/1810.02054, 2018.

\bibitem[Eldan and Shamir(2016)]{eldan}
Ronen Eldan and Ohad Shamir.
\newblock The Power of Depth for Feedforward Neural Networks.
\newblock
In {\em JMLR} 49, pages 1--34, 2016.

\bibitem[Elsayed et~al.(2018)]{max_nn_app1}
Gamaleldin~F. Elsayed, Dilip Krishnan, Hossein Mobahi, Kevin Regan, and Samy
  Bengio.
\newblock Large margin deep networks for classification.
\newblock In \emph{NIPS}, pages 850--860, 2018. 


\bibitem[Furst et~al.(1981)]{Par1}
Merrick Furst, James~B. Saxe, and Michael Sipser.
\newblock Parity, circuits, and the polynomial-time hierarchy.
\newblock In \emph{FOCS}, pages 260--270, 1981.

\bibitem[Gens and Domingos(2014)]{deep_sym}
Robert Gens and Pedro~M Domingos.
\newblock Deep symmetry networks.
\newblock In
\emph{Advances in Neural Information Processing Systems
  27}, pages 2537--2545, 2014.

\bibitem[H{\aa}stad(1987)]{Par3}
Johan H{\aa}stad.
\newblock \emph{Computational Limitations of Small-depth Circuits}.
\newblock MIT Press, Cambridge, MA, USA, 1987.

\bibitem[Jacot et~al.(2018)]{jacot}
Arthur Jacot, Franck Gabriel, and Cl{\'e}ment Hongler.
\newblock Neural tangent kernel: Convergence and generalization in neural
  networks.
\newblock In \emph{NIPS}, pages 8580--8589, 2018. 

\bibitem[Kearns(1998)]{Kearns}
Michael Kearns.
\newblock Efficient noise-tolerant learning from statistical queries.
\newblock \emph{J. ACM}, 45 (6), pages 983--1006, 1998.

\bibitem[Krauth and Mezard(1987)]{perc_mod2}
Werner Krauth and Marc Mezard.
\newblock Learning algorithms with optimal stability
in neural networks.
\newblock \emph{J. Phys.}, A20, pages L745--L752, 1987.

\bibitem[{Lecun} et~al.(1998)]{lecun2}
Y.~{Lecun}, L.~{Bottou}, Y.~{Bengio}, and P.~{Haffner}.
\newblock Gradient-based learning applied to document recognition.
\newblock \emph{Proceedings of the IEEE}, 86 (11), pages
  2278--2324, 1998.

\bibitem[Li and Liang(2018)]{sgd_lsd2}
Yuanzhi Li and Yingyu Liang.
\newblock Learning overparameterized neural networks via stochastic gradient descent on structured data, 2018.

\bibitem[Littlestone and Warmuth(1986)]{little}
Nick Littlestone and Manfred~K. Warmuth.
\newblock Relating data compression and learnability.
\newblock Technical report, 1986.

\bibitem[Liu et~al.(2016)]{max_nn_app4}
Weiyang Liu, Yandong Wen, Zhiding Yu, and Meng~Meng Yang.
\newblock Large-margin softmax loss for convolutional neural networks.
\newblock In \emph{ICML}, 2016.

\bibitem[Masato~Iyoda et~al.(2003)]{parity_arch4}
Eduardo Masato~Iyoda, Hajime Nobuhara, and Kaoru Hirota.
\newblock A solution for the n-bit parity problem using a single translated
  multiplicative neuron.
\newblock \emph{Neural Processing Letters}, 18:\penalty0 233--238, 12 2003.

\bibitem[Minsky and Papert(1988)]{Minsky}
Marvin~L. Minsky and Seymour~A. Papert.
\newblock \emph{Perceptrons: Expanded Edition}.
\newblock MIT Press, Cambridge, MA, USA, 1988.

\bibitem[Moran et~al.(2018)]{perc_comp}
Shay Moran, Ido Nachum, Itai Panasoff, and Amir Yehudayoff.
\newblock On the perceptron's compression.
\newblock \emph{CoRR}, abs/1806.05403, 2018.

\bibitem[Novikoff(1962)]{Nov}
Albert B.J. Novikoff. 
\newblock On convergence proofs on perceptrons. 
\newblock In {\em Proceedings
of the Symposium on the Mathematical Theory of Automata},
volume 12, pages 615--622, 1962.


\bibitem[Rahimi and Recht(2008)]{recht}
Ali Rahimi and Benjamin Recht.
\newblock Random features for large-scale kernel machines.
\newblock In J.~C. Platt, D.~Koller, Y.~Singer, and S.~T. Roweis, editors,
  \emph{Advances in Neural Information Processing Systems 20}, pages
  1177--1184, 2008.

\bibitem[{Romero} and {Alquezar}(2002)]{max_nn_app2}
E.~{Romero} and R.~{Alquezar}.
\newblock Maximizing the margin with feedforward neural networks.
\newblock In \emph{Proceedings of the 2002 International Joint Conference on
  Neural Networks. IJCNN'02 (Cat. No.02CH37290)}, volume~1, pages 743--748, 2002.

\bibitem[Rosenblatt(1958)]{Rosenblatt}
F.~Rosenblatt.
\newblock The perceptron: A probabilistic model for information storage and
  organization in the brain.
\newblock \emph{Psychological Review}, pages 65--386, 1958.

\bibitem[Shalev-Shwartz and Ben-David(2014)]{understanding}
Shai Shalev-Shwartz and Shai Ben-David.
\newblock \emph{Understanding machine learning: From theory to algorithms}.
\newblock Cambridge university press, 2014.

\bibitem[Shamir(2016)]{Shamir}
Ohad Shamir.
\newblock Distribution-specific hardness of learning neural networks.
\newblock \emph{CoRR}, abs/1609.01037, 2016.

\bibitem[Sokolic et~al.(2016)]{marg_nn_t1}
Jure Sokolic, Raja Giryes, Guillermo Sapiro, and Miguel R.~D. Rodrigues.
\newblock Margin preservation of deep neural networks.
\newblock \emph{CoRR}, abs/1605.08254, 2016.

\bibitem[Sokolic et~al.(2017)]{marg_nn_t2}
Jure Sokolic, Raja Giryes, Guillermo Sapiro, and Miguel R.~D. Rodrigues.
\newblock Robust large margin deep neural networks.
\newblock \emph{IEEE Transactions on Signal Processing}, 65, pages
  4265--4280, 2017.

\bibitem[Song et~al.(2017)]{SongVW}
Le~Song, Santosh Vempala, John Wilmes, and Bo~Xie.
\newblock On the complexity of learning neural networks.
\newblock \emph{CoRR}, abs/1707.04615, 2017.

\bibitem[Soudry and Carmon(2016)]{no_bad}
Daniel Soudry and Yair Carmon.
\newblock No bad local minima: Data independent training error guarantees for
  multilayer neural networks.
\newblock  2016.

\bibitem[Sun et~al.(2015)]{marg_nn_t4}
Shizhao Sun, Wei Chen, Liwei Wang, and Tie-Yan Liu.
\newblock Large margin deep neural networks: Theory and algorithms.
\newblock \emph{CoRR}, abs/1506.05232, 2015.

\bibitem[Telgarsky(2016)]{telg}
Matus Telgarsky.
\newblock Representation Benefits of Deep Feedforward Networks.
\newblock
In {\em JMLR}, 49, pages 1 -- 23, 2016.

\bibitem[Wilamowski et~al.(2003)]{parity_arch}
Bogdan Wilamowski, David Hunter, and Aleksander Malinowski.
\newblock Solving parity-n problems with feedforward neural networks.
\newblock In {\em IJCNN},
pages 2546 -- 2551, 08 2003.

\bibitem[Arslanov et~al.(2002)]{parity_arch3}
M~Z.~Arslanov, D~U.~Ashigaliev, and Esraa Ismail.
\newblock N-bit parity ordered neural networks.
\newblock \emph{Neurocomputing}, 48:\penalty0 1053--1056, 10 2002.

\bibitem[Zaheer et~al.(2017)]{deepsets}
Manzil Zaheer, Satwik Kottur, Siamak Ravanbakhsh, Barnabas Poczos, Ruslan
  Salakhutdinov, and Alexander Smola.
\newblock Deep sets, 2017.

\bibitem[Zou et~al.(2018)]{zou}
Difan Zou, Yuan Cao, Dongruo Zhou, and Quanquan Gu.
\newblock Stochastic gradient descent optimizes over-parameterized deep relu
  networks.
\newblock \emph{CoRR}, abs/1811.08888, 2018.

\end{thebibliography}

\appendix


%
%
%
%
%
%
%
%
%

%

\section{The Modified Perceptron}
 \label{proof_perc}

\begin{proof}[Proof of Theorem~\ref{R-perc}]
Denote by $w^*$ the optimal separating hyperplane with $\lv w^*\rv=1$. It satisfies  $y_i w^* \cdot x_i\geq \gamma $ for all $x_i$. By the definition,
$$w \tu \cdot w^* = w ^{(t-1)}\cdot w^* +y_i w^*\cdot x_i \geq  \gamma ht$$
and
$$\lv w \tu \rv^2=  \lv w ^{(t-1)}\rv^2 +2y_i w ^{(t-1)} x_i h +(\lv x_i \rv h)^2 \leq \lp 2 \beta h+(Rh)^2 \rp t.$$
By Cauchy-Schwarz inequality, $1\geq w \tu \cdot w^*/\lv w\tu  \rv$. So the number of updates is bounded by $$\dfrac{2\beta h +(Rh)^2}{(\gamma h)^2}.$$
At time $t$ the margin of any $x_i$ that does not require an update is at least 
$$ \dfrac{\beta}{ \lv  w\tu  \rv } \geq   \dfrac{\beta}{ \sqrt{  \lp 2 \beta h+(Rh)^2 \rp t } }.  $$
The right hand side is monotonically decreasing function of $t$ so by plugging in the maximal number of updates we see that the minimal margin of the output is at least $$\dfrac{\gamma  \beta h}{2\beta h+(Rh)^2}.$$
\end{proof}

\end{document}